\newcommand{\printfnsymbol}[1]{%
  \textsuperscript{\@fnsymbol{#1}}%
}
\title{Adaptive Learning of Rank-One Models for \\Efficient Pairwise Sequence Alignment}
\author{%
  Govinda M. Kamath\thanks{Equal Contributors}$\ \ ^1$, Tavor Z. Baharav \printfnsymbol{1}$^2$, and Ilan Shomorony$\ ^3$ \\
  $^1$Microsoft Research New England, Cambridge, MA\\
  $^2$Department of Electrical Engineering, Stanford University,
  Stanford, CA\\
  $^3$Department of Electrical and Computer Engineering,\\
  University of Illinois, Urbana-Champaign, IL\\
  \texttt{gokamath@microsoft.com, tavorb@stanford.edu, ilans@illinois.edu} \\
}
\newcommand{\x}{{\mathbf x}}
\newcommand{\p}{{\mathbf p}}
\newcommand{\q}{{\mathbf q}}
\newcommand{\uvec}{{\mathbf u}}
\newcommand{\vvec}{{\mathbf v}}
\renewcommand{\u}{\uvec}
\renewcommand{\v}{\vvec}
\newcommand{\al}[1]{\begin{align}#1\end{align}}
\newcommand{\aln}[1]{\begin{align*}#1\end{align*}}
\newcommand\one{{\mathds{1}}}
\newcommand\1{{\mathbf{1}}}
\newcommand\Ber{{\rm Ber}}
\renewcommand{\top}{T}
\newcommand{\xor}{\oplus}
\newcommand{\js}{{\rm JS}}
\newcommand{\ignore}[1]{}
\DeclareMathOperator*{\argmax}{\arg\!\max}
\newif\ifshowanswer    
\newcommand{\isitthree}[1]
{
  \ifnum#1=3
    number #1 is 3
  \else
    number #1 is not 3
  \fi
}
\newcommand{\be}{\begin{equation}}
\newcommand{\ee}{\end{equation}}
\newcommand\R{{\mathbb{R}}}
\renewcommand\P{\Pr} 
\newcommand\E{{\mathbb{E}}}
\renewcommand\Pr{{\mathbb P }}
\newcommand\CI{{\mathcal I}}
\newcommand\ep{{\epsilon}}
\renewcommand\P{{\mathbb{P}}}
\newtheorem{thm}{Theorem}
\newtheorem{lem}{Lemma}
\newtheorem{note}{Remark}
\begin{document}

\maketitle

\begin{abstract}
Pairwise alignment of DNA sequencing data is a ubiquitous task in bioinformatics and typically represents a heavy computational burden. State-of-the-art approaches to speed up this task use hashing to identify short segments ($k$-mers) that are shared by pairs of reads, which can then be used to estimate alignment scores. 
However, when the number of reads is large, accurately estimating alignment scores for all pairs is still very costly.
Moreover, in practice, one is only interested in identifying pairs of reads with large alignment scores.
In this work, we propose a new approach to pairwise alignment estimation based on two key new ingredients.
The first ingredient is to cast the problem of pairwise alignment estimation under a general framework of rank-one crowdsourcing models, where the workers' responses correspond to $k$-mer hash collisions.
These models can be accurately solved via a spectral decomposition of the response matrix.
The second ingredient is to utilise a multi-armed bandit algorithm to adaptively refine this spectral estimator only for read pairs that are likely to have large alignments. 
The resulting algorithm 
iteratively performs a spectral decomposition of the response matrix for 
adaptively chosen subsets of the read pairs.
\end{abstract}

\section{Introduction} \label{sec:introduction}

A key step in many bioinformatics analysis pipelines is the identification of regions of
similarity between pairs of DNA sequencing reads.
This task, known as \emph{pairwise sequence
alignment}, is a heavy computational burden, particularly in the context of third-generation long-read sequencing technologies, which produce noisy reads \citep{pacbio_errorRate}. 
This challenge is commonly addressed via a two-step approach: first, 
an alignment estimation procedure is used to 
identify those
pairs that are likely to have a large alignment.
Then, computationally intensive alignment algorithms are applied only to the selected pairs.
This two-step approach can greatly speed up the alignment task because, in practice, one  only cares about the
alignment between reads with a large sequence identity or overlap.

Several works have
developed ways to efficiently
estimate pairwise alignments
\citep{Myers2014,Berlin2015,li2016minimap,li2018minimap2}.
The proposed algorithms typically rely on hashing to efficiently find pairs of reads that share many $k$-mers (length-$k$ contiguous substrings).
Particularly relevant to our discussion is the MHAP algorithm of \citet{Berlin2015}.
Suppose we want to estimate the overlap size between two strings $S_0$ and $S_1$ and let $\Gamma(S_i)$ be the set of all $k$-mers in $S_i$, $i=0,1$.
For a hash function $h$, we can compute a min-hash
\begin{equation}
    h(S_i) :=
    \min \{h(x) :  x \in \Gamma(S_i)\},
\end{equation}
for each read $S_i$.
The key observation behind MHAP is that, for a randomly selected hash function $h$,
\begin{align}
    \Pr\left[ h(S_0) =  h(S_1)\right] = \frac{|\Gamma(S_0) \cap \Gamma(S_1)|}{|\Gamma(S_0) \cup \Gamma(S_1)|}.
    \label{eq:prjk}
\end{align}
In other words, the indicator function $\one\{h(S_0) = h(S_1)\}$ provides an unbiased estimator for the $k$-mer \emph{Jaccard similarity} between the sets $\Gamma(S_0)$ and $\Gamma(S_1)$, which we denote by $\js_k(S_0,S_1)$.
By computing $\one\{h(S_0) = h(S_1)\}$ for several different random hash functions, one can thus obtain an arbitrarily accurate estimate of $\js_k(S_0,S_1)$.
As discussed in \cite{Berlin2015}, 
$\js_k(S_0,S_1)$ serves as an estimate for the overlap size and can be used to filter pairs of reads that are likely to have a significant overlap.

Now suppose we fix a reference read $S_0$ and wish to estimate the size of the overlap between $S_0$ and $S_i$, for $i=1,...,n$.
Assume that all reads are of length $L$ and let $p_i \in [0,1]$ be the overlap fraction between $S_i$ and $S_0$ (i.e., the maximum $p$ such that a $pL$-prefix of $S_i$ matches a $pL$-suffix of $S_0$ or vice-versa).
By taking $m$ random hash functions $h_1,...,h_m$, we can compute min-hashes $h_j(S_i)$ for $i=0,1,...,n$ and $j=1,...,m$.
The MHAP approach corresponds to estimating each $p_i$ as 
\al{
\hat p_i = \frac1m \sum\nolimits_{j=1}^m \one\{h_j(S_0) = h_j(S_i)\}.
\label{eq:mhap}
}
In the context of crowdsourcing and vote aggregation \citep{dawid1979maximum,raykar2010learning,ghosh2011moderates},
one can think of each hash function $h_j$ as a worker/expert/participant, who is providing binary responses $Y_{i,j} = \one\{h_j(S_0) = h_j(S_i)\}$ to the questions ``do $S_i$ and $S_0$ have a large alignment score?'' for $i=1,...,n$.
Based on the binary matrix of observations $Y = [Y_{i,j}]$, we want to estimate the true overlap fractions $p_1,...,p_n$.

The idea of jointly estimating $p_1,...,p_n$ from the whole matrix $Y$ was recently proposed by \citet{baharav2019spectral}.
The authors noticed that in practical datasets the distribution of $k$-mers can be heavily skewed.
This causes some hash functions $h_j$ to be ``better than others'' at estimating alignment scores.
Hence, much like in crowdsourcing models, each worker has a different level of expertise, which determines the quality of their answer to all questions.
Motivated by this, \citet{baharav2019spectral} proposed a model where each hash function $h_j$ has an associated unreliability parameter $q_j \in [0,1]$ and, for $i=1,...,n$ and $j=1,...,m$, the binary observations are modeled as 
\al{
Y_{i,j} \sim  \Ber(p_i) \vee \Ber(q_j),
\label{eq:pqmodel}
}
where $\Ber(p)$ is a Bernoulli distribution with parameter $p$ and $\vee$ is the OR operator.
If a given $h_j$ assigns low values to common $k$-mers, spurious min-hash collisions are more likely to occur, 
leading to 
the observation $Y_{i,j} = 1$ when $S_i$ and $S_0$ do not have an overlap (thus being a ``bad'' hash function).
Similarly, some workers in crowdsourcing applications provide less valuable feedback, but we cannot know a priori how reliable each worker is.

A key observation 
about the model in \eqref{eq:pqmodel} is that, in expectation, the observation matrix $Y$ is rank-one after accounting for an offset.
More precisely, since $\E Y_{i,j} = p_{i} + q_{j}- p_{i}q_{j} = (1-p_i)(q_j-1) + 1$,
\begin{align}
    \E Y -\1 \1^\top= (\1-\p )(\q-\1)^\top,
\end{align}
where $\p = [p_1,...,p_m]^\top$ and $\q = [q_1,...,q_n]^\top$.
\citet{baharav2019spectral} proposed to estimate $\p$ by computing a singular value decomposition (SVD) of $Y - \1 \1^\top$, and setting $\hat \p = \1 - \uvec$, where $\uvec$ is the leading left singular vector of $Y - \1 \1^\top$.
The resulting overlap estimates $\hat{p}_1,...,\hat{p}_n$ are called the \emph{Spectral Jaccard Similarity} scores and were shown to provide a much better estimate of overlap sizes than the estimator given by \eqref{eq:mhap},
by accounting for the variable quality of hash functions for the task.

In this paper, motivated by the model of \citet{baharav2019spectral}, we consider the more general framework of rank-one models.
In this setting, a vector of parameters $\uvec = [u_1,...,u_n]^T$ (the item qualities) is to be estimated from the binary responses provided by $m$ workers, and the $n \times m$ observation matrix $X$ is assumed to satisfy $\E X = \u \v^T$.
In the context of these rank-one models, a natural estimator for $\uvec$ is the leading left singular vector of $X$.
Such a spectral estimator has been shown to have good performance both in the context of pairwise sequence alignment \citep{baharav2019spectral} and in voting aggregation applications \cite{ghosh2011moderates,karger2013efficient}.
However, the spectral decomposition by default allocates worker resources uniformly across all items. 
In practice, one is often only interested in identifying the ``most popular'' items, which, in the context of pairwise sequence alignment, corresponds to the reads $S_i$ that have the largest overlaps with a reference read $S_0$.
Hence, we seek strategies that can harness the performance of spectral methods while using adaptivity to avoid wasting worker resources on unpopular items. 

\textbf{Main contributions:}
We propose an adaptive spectral estimation algorithm, based on multi-armed bandits, for identifying the $k$ largest entries of the leading left singular vector $\u$  of $\E X$.
A key technical challenge is that multi-armed bandit algorithms generally rely on our ability to build confidence intervals for each arm, but it is difficult to obtain tight \emph{element-wise} confidence intervals for the singular vectors of random matrices with low expected rank \citep{abbe2017entrywise}.
For that reason, we propose a variation of the spectral estimator for $\uvec$, in which one computes the leading \emph{right} singular vector first and uses it to estimate the entries of $\u$ via a \emph{matched filter} \cite{verdu1998multiuser}.
This allows us to compute entrywise confidence intervals for each $u_i$, 
which in turns allows us to adapt the Sequential Halving bandit algorithm of \citet{Karnin2013} to identify the top-$k$ entries of $\u$.
We provide theoretical performance bounds on the total workers' response budget required to correctly identify the top-$k$ items with a given probability.
We empirically validate our algorithm on controlled experiments that simulate the vote aggregation scenario and on real data in the context of pairwise alignment of DNA sequences.
For a PacBio \mbox{\emph{E.~coli}} dataset \cite{PBecoli}, we show that adaptivity can reduce the budget requirements (which correspond to the number of min-hash comparisons) by around half.

\textbf{Related work:}
Our work is motivated by the bioinformatics literature on pairwise sequence alignment \citep{Myers2014,Berlin2015,li2016minimap,li2018minimap2}.
In particular, we build on the idea of using min-hash-based techniques to efficiently estimate pairwise sequence alignments \cite{Berlin2015}, described by the estimator in (\ref{eq:mhap}).
More sophisticated versions of this idea have been proposed, such as the use of a \emph{tf-idf} (term frequency-inverse document frequency) weighting for the different hash functions \cite{chum2008near,marccais2019locality}, which 
follows the same observation 
that ``some hashes are better than others''
made by \citet{baharav2019spectral}.
A proposed strategy to reduce the number of hash functions needed for the alignment estimates is to use 
\emph{bottom sketches} \cite{ondov2016mash}.
More precisely, for a single hash function, one can compute $s$ minimisers per read (the bottom-$s$ sketch) and estimate the alignments based on the size of the intersection of bottom sketches.
\emph{Winnowing} has also been used in combination with min-hash techniques to allow the mapping of reads to very long sequences, such as full genomes \cite{jain2017fast}.

The literature on crowdsourcing and vote aggregation is vast \citep{raykar2010learning,dalvi2013aggregating, whitehill2009whose,ghosh2011moderates, karger2014budget, liu2012variational, karger2013efficient, whitehill2009whose,  welinder2010multidimensional, zhou2014aggregating, zhou2012learning, zhang2014spectral, shah2016permutation}. 
Many of these works are motivated by the classical Dawid-Skene model \citep{dawid1979maximum}. 
\citet{ghosh2011moderates} considered a setting where the
questions have binary answers and the workers' responses are noisy.
They proposed a rank-one model and
used a spectral method to estimate the true answers.
For a similar setting, \citet{karger2014budget} showed that random allocation
of questions to workers (according to a sparse random
graph) followed by belief propagation is 
optimal to obtain answers to the questions with some
probability of error, and this work was later extended 
\citep{karger2013efficient, dalvi2013aggregating}. 
While the rank-one model these works have considered is similar in spirit to the 
one we consider, in our setting, the \emph{true} answers to the questions are not binary.
Rather, our answers are parameters in $(0,1)$, which can be thought of as the
quality of an item or the 
fraction of the population that would answer ``yes'' to a question.
Natural tasks to consider 
in our case 
would be to find
the top ten products among a catalogue of 100,000 products,
or identify the items that are liked by more than 80\% of the
population. 
Notice that such questions are meaningless in the case of questions with binary answers.

The use of adaptive strategies for ranking a set of items or identifying the top-$k$ items have been studied in the context of \emph{pairwise comparisons} between the items  \citep{heckel2018approximate,heckel2019active,szorenyi2015online, busa2013top}.
Moreover, the top-$k$ arm selection problem has been considered in many contexts 
and a large set of algorithms exist to identify the 
best items while minimising the
number of observations required to do that \citep{maron1994hoeffding, even2002pac, heidrich2009hoeffding, bubeck2013multiple, kalyanakrishnan2012pac, jun2016top, Karnin2013}.
Recent works have taken advantage of some of these multi-armed bandit algorithms 
to solve large-scale computation problems 
\citep{Bagaria2017,Bagaria2018, baharav2019ultra},
which is similar in flavor to our work.

\textbf{Outline of manuscript: }
In Section \ref{sec:model},
we introduce rank-one models and give examples of potential applications. 
In Section \ref{sec:analysis}, we develop a spectral estimator for
these models that enables construction of confidence intervals. 
In Section \ref{sec:adaptive}, we leverage these confidence intervals to
develop adaptive bandit-based algorithms. Section \ref{sec:results}
presents empirical results.

\section{Rank-One Models}
\label{sec:model}

While our main target application is the pairwise sequence alignment problem, we define rank-one models for the general setting of response aggregation problems.
In this setting, 
we are interested in estimating a set of parameters, or item values,  
$u_1,...,u_n$.
To do that, we recruit a set of workers with unknown levels of expertise, which provide binary opinions about the items.
We can choose the number of workers and their opinions can be requested for any 
subset of the $n$ items.
The rank-one model assumption is that the matrix of responses $X = [X_{i,j}]$ is rank-one in expectation; i.e.,
\begin{align}
    \E X = \uvec \vvec^\top
\end{align}
for unknown vectors $\uvec$ and $\vvec$ with entries in $(0,1)$.
This means that, for each $i$ and $j$, $X_{i,j}$ has a $\Ber(u_iv_j)$ distribution.
Furthermore, we assume throughout that all entries $X_{i,j}$ are independent.

Abstractly, this setting applies to a situation in which we have 
a list of questions that we want answered. 
One can think of 
a question as the rating of a movie or whether a product is liked.
The parameter associated with the $i$th question, $u_i$, can be thought
of as representing the average rating of the movie, or the
fraction of people in the population who like the product.
We can think of $X_{i,j}$ as a noisy version of 
the response of
worker $j$ to question $i$, with the noise 
modelling respondent error. 
We point out that $X$ can alternatively be viewed
as noisy observations of responses to the questions
through a \emph{binary channel} \citep{cover2012elements}, with each worker
having a channel of distinct characteristics. 
We next discuss two instances of this model and the corresponding binary channels.

\begin{figure}[t]
\begin{center}
\centerline{\includegraphics[width=0.7\columnwidth]{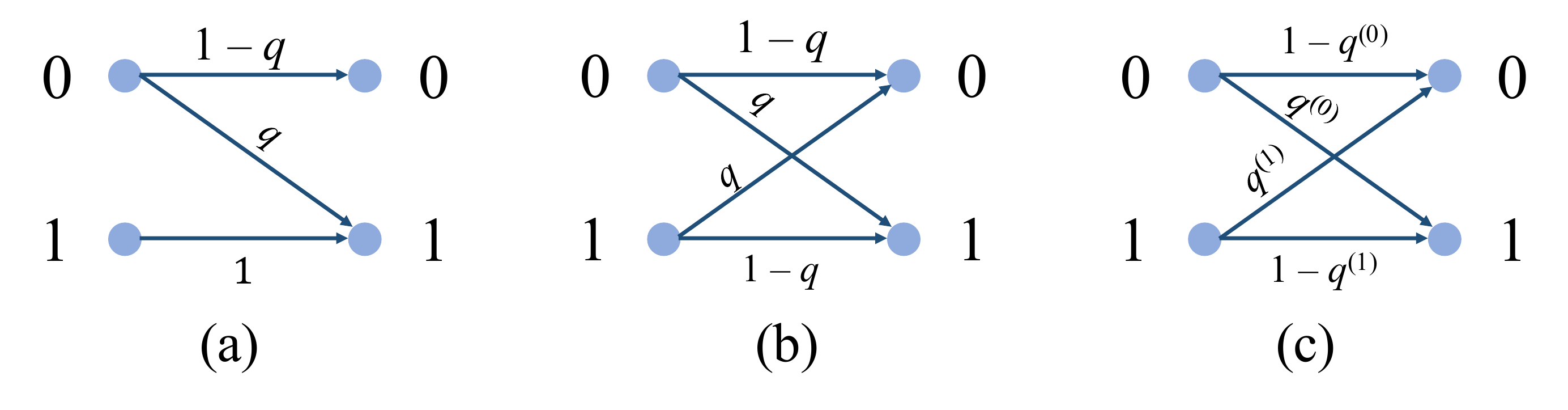}}
\end{center}
\vskip -0.2in
\caption{\emph{Three Channel Models}: (a) the $Z$-channel models one-sided errors; (b) the binary symmetric channel models two-sided symmetric errors; (c) the general binary channel admits different probabilities $q^{(0)}$ for $0 \to 1$ errors and $q^{(1)}$ for $1 \to 0$ errors.
}
\label{fig:channels}
\vspace{-.3cm}
\end{figure}

{\bf One-sided error:}
In the pairwise sequence alignment problem presented in 
Section \ref{sec:introduction}, we want to recover the 
pairwise overlap fractions (or alignment scores) $p_i \in (0,1)$
between read $S_0$ and read $S_i$, for $i=1,...,n$.
The observation for read pair $(S_0,S_i)$ and hash function $h_j$
is modelled by $Y_{i,j} \sim  \Ber(p_i) \vee \Ber(q_j)$,
which can be thought of as observing a 
$\Ber(p_i)$ random variable through a \emph{Z-channel} with 
crossover probability $q_j$. 
A {Z-channel}, shown in Fig.~\ref{fig:channels}(a), is one where an input $1$
is never flipped but an input $0$ is flipped with some
probability \citep{cover2012elements}. 
Hence, this models one-sided errors. 
If we process the data as $X := Y-\1\1^T$,  we then have that
\begin{align}
    \mathbb{E}[X] = (\1 - \p )(\q -\1)^T,
\end{align}
giving us $\uvec =\1 - \p$ and $\vvec = \1 - \q$.
While, in this case, the entries $X_{i,j}$ are technically in $\{0,-1\}$, our main results (presented in Section~\ref{sec:analysis} for a binary matrix $X \in \{0,1\}^{n\times m}$) can be readily extended.

{\bf Two-sided error:}
In the binary crowdsourcing problem, $n$ items have associated 
parameters $p_i \in (0,1)$ (the population rating of the item), for $i=1,...,n$.
The observation of the rating of worker $j$ on the $i$th item is 
modelled as 
$
    Y_{i,j} \sim  \Ber(p_i) \xor \Ber(q_j),
$
where $\xor$ represents the XOR operation. 
This can be thought of as observing a 
$\Ber(p_i)$ random variable through a \emph{Binary Symmetric Channel} with 
crossover probability $q_j$. 
A {Binary Symmetric Channel}, shown in Fig.~\ref{fig:channels}(b), is one where the probability of flipping $1$
to $0$ and $0$ to $1$ is the same.
The processed data in this case is $X := Y-\frac{1}{2}\1\1^T$, and the expected value of our observation matrix is given by
\begin{align}
    \mathbb{E}[X] = \left(\p - \tfrac{1}{2}\1 \right)(\1 - 2\q)^T,
    \label{eq:symmetricmodel}
\end{align}
giving us $\uvec =\p - \frac{1}{2}\1$ and $\vvec = \1 - 2\q$.
As in the case of one-sided errors, the observations $X_{i,j}$ are not in $\{0,1\}$ but they only take two values and the results in Section~\ref{sec:analysis} still hold.

Notice that if we do not make the assumption of symmetry and use a
general binary channel, shown in Fig.~\ref{fig:channels}(c),
the model is still low-rank (it is rank-$2$) in expectation. 
In this manuscript, we focus on rank-one models, but briefly
discuss this generalization in Appendix~\ref{app:rank2}.

{\bf Model Identifiability:}
Strictly speaking, the models described above are not identifiable,
unless extra information is provided. 
This is because $\|\u\|$ and $\|\v\|$
are unspecified, and replacing $\uvec$ and $\vvec$ with $\alpha \uvec$ and $\frac1\alpha \vvec$ leads to the same distribution for the observation matrix $X$.

In practice one can overcome this issue by including questions with 
known answers. For the pairwise sequence alignment problem of
\citet{baharav2019spectral}, the authors add
``calibration reads'' to the dataset. 
These are random reads, expected to have zero overlap with other reads in the 
dataset. In a crowdsourcing setting one similarly can
add questions whose answers are known to the dataset. 
Based on questions with known answers, it is possible 
accurately estimate the ``average expertise'' of the workers, 
captured by $\|\v\|$.
In order to avoid overcomplicating the model and the results and to 
circumvent the unidentifiability issue, we assume that $\|\v\|$ is known in our theoretical analysis.
In our experiments, we adopt the known-questions strategy to estimate $\|\v\|$.

\section{Spectral Estimators}
\label{sec:analysis}

Consider the general rank-one model described in Section~\ref{sec:model}.
The $n \times m$ binary matrix of observations $X$ can be written as
$
X = \E [X] + W,
$
where $\E[X] = \uvec \vvec^T$.
We assume throughout this section that $u_i, v_j \in [c,C]$,
where $0 < c < C < 1$. 
Moreover, we assume that $\|\vvec\|$ is known, in order to make the model identifiable, as described in Section~\ref{sec:model}.

A natural estimator for $\uvec$ is the leading left singular vector of $X$ (or the leading eigenvector of $XX^T$), rescaled to have the same norm as $\uvec$.
One issue with such an estimator is that it is not straightforward to obtain confidence intervals for each of its entries. 
There is a fair amount of work
in constructing $\ell_2$ confidence intervals around eigenvectors of perturbed matrices 
\citep{coja2010graph, chin2015stochastic, chen2016community}. 
However, 
the translation of $\ell_2$ control over eigenvectors to element-wise 
control using standard bounds costs us a $\sqrt{n}$ factor,
which makes the resulting bounds too loose for the purposes of adaptive algorithms, which we will explore in Section~\ref{sec:adaptive}.
There has been some work on directly obtaining $\ell_{\infty}$ control 
for eigenvectors by
\citet{abbe2017entrywise, fan2016ell}. 
However, they analyse a slightly different quantity, and so a direct application of these results 
to our rank-one models does not give us the desired element-wise control.

In order to overcome this issue
and obtain element-wise confidence bounds on our estimate of each $u_i$,
we propose a variation on the standard spectral estimator for $\uvec$.
To provide intuition to our method, let us consider a simpler setting -- one where
we know $\vvec$ exactly. 
In this case a popular means to estimate $\uvec$ is the \emph{matched 
filter} estimator \citep{verdu1998multiuser} 
\begin{align}
    \hat{u}_i &= X_{i,.}\frac{\vvec}{\|\vvec\|^2},
\end{align}
where $X_{i,.}$ is the $i$th row of $X$.
It is easy to see that $\hat{u}_i$ is an unbiased estimator of $u_i$, 
and standard concentration inequalities can be used to obtain confidence intervals.
We try to mimic 
this intuition by splitting the rows of the matrix into two -- red rows
and blue rows. 
We then use the red rows to obtain an estimate $\hat \vvec$ of $\v$. 
We treat 
this as the true value of $\v$ and obtain the matched filter estimate for the
$u_i$s corresponding to the blue
rows, which gives us element-wise
confidence intervals.
We then use the blue rows to estimate $\v$, and apply the matched filter to obtain estimates for the $u_i$s corresponding to the red rows.
This is summarised in the following algorithm.
\begin{algorithm}[H]
\begin{algorithmic}[1]
\caption{Spectral estimation of $\u$ 
\label{alg:spectral}}
\State \textbf{Input:} $X \in \{0,1\}^{n \times m}$, $\| \v \|$
\State Split $X$ into two $\tfrac{n}2 \times m$ matrices $X_A$ and $X_B$
\State $\hat \vvec_A \gets $ leading right singular vector of $X_A$ 
\State $\hat \vvec_B \gets $ leading right singular vector of $X_B$
\State $\hat \uvec_A \gets X_A \frac{\hat \vvec_B}{\|\hat \vvec_B\|\|\v\|}$,\  $\hat \uvec_B \gets X_B \frac{\hat \vvec_A}{\|\hat \vvec_A\|\|\v\|}$
\State \textbf{return} $\hat \uvec = \begin{bmatrix} \hat{\uvec}_A \\ \hat{\uvec}_B \end{bmatrix}$
\end{algorithmic}
\end{algorithm}
The main result in this section is an element-wise confidence interval for the resulting $\hat{\u}$. 
\begin{thm} \label{thm:final_confidence}
When given $X$ and $\|\v\|$ as inputs, Algorithm~\ref{alg:spectral} returns $\hat{\u} = [\hat{u}_1,...,\hat{u}_m]^T$ satisfying 
\begin{align} \label{eq:uibound}
    \Pr\left(|\hat{u}_i - u_i| > \epsilon\right) 
    &\le 3 n \exp \left( -C_1 { m \epsilon^2} \right),
\end{align}
for $i \in \{1,...,n\}$, $0< \ep < 1$, $m \leq n$, and constant $C_1$ specified in Appendix~\ref{app:constants}.
\end{thm}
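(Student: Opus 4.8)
The plan is to control the error $|\hat{u}_i - u_i|$ by decomposing it through the matched-filter estimator and bounding separately (a) the estimation error of the auxiliary singular vector $\hat{\vvec}_B$ (or $\hat{\vvec}_A$), and (b) the concentration of the row-wise inner product given a fixed $\vvec$. Since the two halves $X_A$ and $X_B$ are symmetric, I will fix $i$ in the ``blue'' block, so that $\hat{u}_i = X_{i,\cdot}\,\hat{\vvec}_A / (\|\hat{\vvec}_A\|\,\|\vvec\|)$ and, crucially, $\hat{\vvec}_A$ is \emph{independent} of the row $X_{i,\cdot}$. Conditioning on $\hat{\vvec}_A$, write $\hat{u}_i = X_{i,\cdot}\,\hat{w}$ with $\hat{w} = \hat{\vvec}_A/(\|\hat{\vvec}_A\|\,\|\vvec\|)$ a fixed unit-ish vector, so $\E[\hat{u}_i \mid \hat{w}] = u_i\, \vvec^\top \hat{\vvec}_A/(\|\hat{\vvec}_A\|\,\|\vvec\|)$. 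The error splits as
\begin{align}
|\hat{u}_i - u_i| \;\le\; \underbrace{\Big| X_{i,\cdot}\hat{w} - u_i \tfrac{\vvec^\top \hat{\vvec}_A}{\|\hat{\vvec}_A\|\,\|\vvec\|} \Big|}_{\text{(I): bounded-difference concentration}} \;+\; \underbrace{u_i \Big| \tfrac{\vvec^\top \hat{\vvec}_A}{\|\hat{\vvec}_A\|\,\|\vvec\|} - 1 \Big|}_{\text{(II): singular-vector accuracy}}.
\end{align}

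For term (I): conditioned on $\hat{w}$, the quantity $X_{i,\cdot}\hat{w} = \sum_{j=1}^m X_{i,j}\hat{w}_j$ is a sum of $m$ independent, bounded terms (each $|X_{i,j}\hat{w}_j| \le |\hat{w}_j|$ with $\sum_j \hat{w}_j^2 = 1/\|\vvec\|^2 \le 1/(m c^2)$), so Hoeffding (or Bernstein) gives $\Pr(|(\mathrm{I})| > \epsilon/2 \mid \hat{w}) \le 2\exp(-c' m \epsilon^2)$ after using $\|\vvec\|^2 \ge m c^2$; since this bound holds uniformly over the conditioning, it holds unconditionally. For term (II): the angle between $\hat{\vvec}_A$ and $\vvec$ must be controlled. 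I would invoke a Davis--Kahan / Wedin-type bound: $\sin\angle(\hat{\vvec}_A,\vvec) \lesssim \|W_A\|_{\mathrm{op}} / \sigma_1(\E X_A)$, where $\sigma_1(\E X_A) = \|\uvec_A\|\|\vvec\| = \Theta(\sqrt{nm})$ (using $u_i,v_j \ge c$ and $n \le$ relevant dimension), and $\|W_A\|_{\mathrm{op}} \lesssim \sqrt{m}$ with probability $\ge 1 - e^{-\Omega(m)}$ by standard bounds on the spectral norm of a random matrix with independent bounded entries (e.g.\ a matrix Bernstein or Seginer-type bound, noting $m \le n$). Hence $\sin\angle(\hat{\vvec}_A,\vvec) \lesssim \sqrt{m}/\sqrt{nm} = 1/\sqrt{n}$, and since $1 - \vvec^\top\hat{\vvec}_A/(\|\hat{\vvec}_A\|\|\vvec\|) \le \tfrac12\sin^2\angle(\cdot) = O(1/n)$, term (II) is $O(1/n) \le \epsilon/2$ whenever $\epsilon \gtrsim 1/n$ — and in the complementary regime $\epsilon \lesssim 1/n$ the bound $3n e^{-C_1 m\epsilon^2}$ exceeds $1$ and is vacuous, so we may assume $\epsilon$ is not too small. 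One subtlety is the sign ambiguity of the singular vector; I would fix the sign of $\hat{\vvec}_A$ to align with $\vvec$ (or equivalently with the all-ones direction, consistent with the $u_i > 0$ convention), which the algorithm implicitly does.

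Combining, $\Pr(|\hat{u}_i - u_i| > \epsilon) \le 2\exp(-c' m \epsilon^2) + \Pr(\|W_A\|_{\mathrm{op}} \gtrsim \sqrt{m}) \le 2\exp(-c'm\epsilon^2) + e^{-\Omega(m)}$, and absorbing the operator-norm failure probability and the union over the at-most-$n$ possible row indices yields the stated $3n\exp(-C_1 m\epsilon^2)$ form after choosing $C_1$ small enough. \textbf{The main obstacle} I anticipate is making term (II) rigorous with the right constants: one needs a clean, explicitly-constant operator-norm bound on $W_A$ (a matrix with independent, mean-subtracted, bounded entries of heterogeneous variance $u_i v_j(1-u_iv_j)$) together with a Davis--Kahan statement in the non-symmetric (singular-vector) form; chaining these while tracking how $c, C$ enter $\sigma_1(\E X_A)$ and the variance proxy is where the constant $C_1$ of Appendix~\ref{app:constants} really gets pinned down. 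A secondary technical point is handling the event $\|\hat{\vvec}_A\| \approx 0$ or the SVD being ill-defined, which occurs only on the operator-norm failure event and can be folded into that term.
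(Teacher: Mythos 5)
Your overall architecture matches the paper's: split the matrix so that $\hat{\vvec}_A$ is independent of the row $X_{i,\cdot}$, view $\hat{u}_i$ as a matched filter, control the row inner product by Hoeffding, and control the auxiliary singular vector by a matrix concentration bound plus Davis--Kahan/Wedin. Your decomposition is a mild (and arguably cleaner) variant: you condition on $\hat{\vvec}_A$ and split into a conditionally centered sum plus $u_i\bigl(1-\cos\angle(\hat{\vvec}_A,\vvec)\bigr)$, whereas the paper centers the weights at $\E[\hat{v}_j/\|\hat{\vvec}\|]$ and therefore carries two extra terms --- a cross term bounded by the realized $\ell_2$ deviation of $\hat{\vvec}/\|\hat{\vvec}\|$ and a bias term bounded by its expected deviation, each \emph{first order} in the angle (this is why the paper needs both a tail bound and an expectation bound in its Lemma~\ref{lem:daviskahan}). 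Your term (II) is second order in the angle, so your ``vacuous-regime'' absorption into $C_1$ is exactly the move the paper makes at the end of its proof, just applied to a smaller quantity.

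One concrete error to fix: for an $(n/2)\times m$ matrix $W_A$ with independent, centered, bounded entries and $m\le n$, the operator norm scales with the \emph{larger} dimension, $\|W_A\|_{op} \asymp \sqrt{n}$ (up to a $\sqrt{\log(m+n)}$ factor via matrix Bernstein, which is the bound the paper uses), not $\sqrt{m}$. Consequently $\sin\angle(\hat{\vvec}_A,\vvec) \lesssim \sqrt{n\log n}/\sqrt{nm} = \sqrt{\log n/m}$, so your term (II) is $O(\log n/m)$, not $O(1/n)$. This does not sink the argument: the claimed bound $3n\exp(-C_1 m\epsilon^2)$ is vacuous unless $\epsilon \gtrsim \sqrt{\log n/(C_1 m)}$, which dominates $\log n/m$ once $m \gtrsim \log n$ (and for smaller $m$ the bound is vacuous for all $\epsilon<1$ after shrinking $C_1$), so the proof still closes after correcting the rate --- but the claim as written, and the resulting ``$\epsilon\gtrsim 1/n$ suffices'' remark, are wrong. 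A second, cosmetic point: $1-\cos\theta \le \sin^2\theta$ for $\cos\theta\ge 0$, but the inequality $1-\cos\theta\le\tfrac12\sin^2\theta$ goes the wrong way; this only affects constants.
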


In the remainder of this section, we describe the key technical results required to prove Theorem \ref{thm:final_confidence}.
We discuss the application of these confidence intervals to create an adaptive algorithm in Section \ref{sec:adaptive}.

To prove Theorem~\ref{thm:final_confidence} we 
first establish a connection between $\ell_2$ control of $\hat{\v}$ and element-wise control of $\hat{\u}$.
Then we provide expectation and tail bounds for the $\ell_2$ error in $\hat{\v}$.
For ease of exposition, we will drop the subscripts $A$ and $B$ in
$X_A$, $X_B$, $\hat{\v}_A$, and $\hat{\v}_B$.
We will implicitly assume that $X$ and $\hat{\v}$ correspond to distinct halves of the data matrix, thus being independent.
The main technical ingredient required to establish Theorem~\ref{thm:final_confidence} is the following lemma.

\begin{lem} \label{lem:elementwise}
The error of estimator $\hat{\u}$ satisfies
\aln{
\| \hat{\u} - \u\|_\infty \leq
\frac{1}{\|\v\|} \left|  \sum_{j=1}^m 
\left( X_{i,j} - u_iv_j \right) \E\left[ \frac{\hat{v}_j}{\|\hat{\v}\|} \right] \right|
+ 
\frac{1}{c} 
\left\|  \frac{\hat{\v}}{\|\hat{\v}\|} - \frac{\v}{\|{\v}\|}  \right\|
+ 
\frac{2}{c} \,
\E \left\|  \frac{\hat{\v}}{\|\hat{\v}\|} - \frac{\v}{\|{\v}\|}  \right\|.
}
\end{lem}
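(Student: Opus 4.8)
The plan is to prove the bound coordinatewise, i.e.\ for a fixed $i$, $|\hat u_i - u_i|$ is at most the right-hand side; the $\ell_\infty$ statement then follows by maximizing over $i$. Abbreviate $\mathbf{w} := \v/\|\v\|$ and $\hat{\mathbf{w}} := \hat\v/\|\hat\v\|$ (both unit vectors), and note that Algorithm~\ref{alg:spectral} returns $\hat u_i = \tfrac{1}{\|\v\|}X_{i,\cdot}^\top\hat{\mathbf{w}}$, while $u_i = \tfrac{u_i}{\|\v\|}\v^\top\mathbf{w}$ because $\v^\top\mathbf{w} = \|\v\|$. The key move is to decompose the error around the \emph{deterministic} vector $\E[\hat{\mathbf{w}}] = \E[\hat\v/\|\hat\v\|]$:
\begin{align*}
\hat u_i - u_i
= \underbrace{\tfrac{1}{\|\v\|}\big(X_{i,\cdot} - u_i\v\big)^\top \E[\hat{\mathbf{w}}]}_{P_1}
\;+\; \underbrace{\tfrac{u_i}{\|\v\|}\,\v^\top\big(\E[\hat{\mathbf{w}}] - \mathbf{w}\big)}_{P_2}
\;+\; \underbrace{\tfrac{1}{\|\v\|} X_{i,\cdot}^\top\big(\hat{\mathbf{w}} - \E[\hat{\mathbf{w}}]\big)}_{P_3},
\end{align*}
which one checks telescopes back to $\hat u_i - u_i$ using $\v^\top\mathbf{w} = \|\v\|$. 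Here $P_1 = \tfrac{1}{\|\v\|}\sum_{j=1}^m (X_{i,j} - u_iv_j)\,\E[\hat v_j/\|\hat\v\|]$ is \emph{exactly} the first term on the right-hand side of the lemma, so nothing more is needed for it at this stage (downstream it is controlled by Hoeffding, being a sum of independent, bounded, mean-zero terms with non-random coefficients).

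It then remains to bound $P_2$ and $P_3$. For $P_2$, Cauchy--Schwarz gives $|P_2| \le \tfrac{u_i}{\|\v\|}\|\v\|\,\|\E[\hat{\mathbf{w}}] - \mathbf{w}\| = u_i\,\|\E[\hat{\mathbf{w}} - \mathbf{w}]\|$, and Jensen's inequality (the norm is convex) yields $\|\E[\hat{\mathbf{w}} - \mathbf{w}]\| \le \E\|\hat{\mathbf{w}} - \mathbf{w}\|$; with $u_i \le C$ this gives $|P_2| \le C\,\E\|\hat{\mathbf{w}} - \mathbf{w}\|$. For $P_3$, Cauchy--Schwarz together with the crude bounds $\|X_{i,\cdot}\|_2 \le \sqrt m$ (since $X_{i,j}\in\{0,1\}$) and $\|\v\| \ge c\sqrt m$ (since $v_j \ge c$) gives $|P_3| \le \tfrac{1}{c}\|\hat{\mathbf{w}} - \E[\hat{\mathbf{w}}]\|$; then the triangle inequality and Jensen again give $\|\hat{\mathbf{w}} - \E[\hat{\mathbf{w}}]\| \le \|\hat{\mathbf{w}} - \mathbf{w}\| + \|\mathbf{w} - \E[\hat{\mathbf{w}}]\| \le \|\hat{\mathbf{w}} - \mathbf{w}\| + \E\|\hat{\mathbf{w}} - \mathbf{w}\|$, so $|P_3| \le \tfrac1c\|\hat{\mathbf{w}} - \mathbf{w}\| + \tfrac1c\E\|\hat{\mathbf{w}} - \mathbf{w}\|$.

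Summing the three bounds gives $|\hat u_i - u_i| \le \tfrac{1}{\|\v\|}\big|\sum_j (X_{i,j} - u_iv_j)\E[\hat v_j/\|\hat\v\|]\big| + \tfrac1c\|\hat{\mathbf{w}} - \mathbf{w}\| + (C + \tfrac1c)\E\|\hat{\mathbf{w}} - \mathbf{w}\|$, and since $C \le 1 \le \tfrac1c$ we have $C + \tfrac1c \le \tfrac2c$, which is exactly the claimed inequality once we recall $\hat{\mathbf{w}} - \mathbf{w} = \hat\v/\|\hat\v\| - \v/\|\v\|$. Taking $\max_i$ then yields the $\ell_\infty$ bound.

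I expect the only genuine obstacle to be spotting this decomposition — in particular, realizing that the ``estimation'' part of the error should be centered at the deterministic vector $\E[\hat\v/\|\hat\v\|]$ rather than at $\v/\|\v\|$, so that $P_1$ has non-random coefficients (amenable later to Hoeffding) while the residual $P_3$ is a pure fluctuation of the normalized right singular vector, to be absorbed by the forthcoming $\ell_2$ bounds on $\|\hat\v/\|\hat\v\| - \v/\|\v\|\|$. Everything past the decomposition is just Cauchy--Schwarz, the triangle inequality, and Jensen; the slightly lossy $2/c$ (versus the $1/c$ obtainable by handling $P_2$ and $P_3$ jointly) is immaterial for the later arguments.
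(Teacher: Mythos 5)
Your proposal is correct and follows essentially the same route as the paper: your three-term telescoping decomposition around the deterministic vector $\E[\hat\v/\|\hat\v\|]$ is exactly the paper's split of the error into $\hat u_i - \E[\hat u_i]$ (your $P_1+P_3$) and $\E[\hat u_i]-u_i$ (your $P_2$), and the subsequent steps (Cauchy--Schwarz/H\"older with $\|\v\|\ge c\sqrt m$, the triangle inequality, Jensen, and $u_i\le 1\le 1/c$ to absorb the constants into $2/c$) match the paper's. The only cosmetic difference is that you bound the fluctuation term $P_3$ by Cauchy--Schwarz with $\|X_{i,\cdot}\|_2\le\sqrt m$ where the paper passes through the $\ell_1$ norm and $\|\x\|_1\le\sqrt m\|\x\|_2$, which yields the identical bound.
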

Notice that the right-hand side of the bound in Lemma~\ref{lem:elementwise} (proved in Appendix~\ref{app:elementwise})
involves the $\ell_2$ error of $\hat \v$, and can in turn be used to bound the $\ell_\infty$ error of the estimator $\hat \u$.
The first term in the bound is a sum of independent, bounded, zero-mean random variables $(X_{i,j}-u_iv_j) \E[\hat{v}_j/\|\hat{\v}\|]$, for $j=1,...,m$.
Using Hoeffding's inequality, we show in Appendix~\ref{app:hoeffding} that, for any $\ep > 0$,
\al{ \label{eq:hoeffding}
\P\left( \left| \textstyle{\sum_{j=1}^m} (X_{i,j}-u_iv_j) 
\E[\hat{v}_j/\|\hat{\v}\|]
\right| > \|\v\| \ep \right)  \leq 2 \exp\left( -{ 2 c^2 m \ep^2}\right).
}
In order to bound the second and third terms on the right-hand side of Lemma~\ref{lem:elementwise},
we resort to matrix concentration inequalities and the Davis-Kahan theorem \citep{davis1970rotation}.
More precisely, we have the following lemma, which we prove in Appendix~\ref{app:davis-kahan-long}.
\begin{lem} \label{lem:daviskahan}
The error of estimator $\hat{\v}$ satisfies
\begin{enumerate}[(a)]
\item $    \Pr\left( \left\| \frac{\hat{\v}}{\|\hat{\v}\|} - \frac{\v}{\|\v\|} \right\|  \ge \epsilon \right) \le 
    (m+n) \exp \left( -C_2 \ep^2 \min(m,n) \right)$, \quad for $0< \ep < 1$,
\item $\E \left\| \frac{\hat{\v}}{\|\hat{\v}\|} - \frac{\v}{\|\v\|} \right\| \leq
C_3 \sqrt{\frac{\log(m+n)}{\min(m,n)}}$,
\end{enumerate}
where $C_2$ and $C_3$ are constants specified in Appendix 
\ref{app:constants}.
\end{lem}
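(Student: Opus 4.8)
The plan is to exploit that $\E X = \u\v^\top$ is \emph{exactly} rank one, so its unique leading right singular vector is $\v/\|\v\|$, with leading singular value $\sigma_1(\E X) = \|\u\|\,\|\v\|$ and — as $\rank\E X = 1$ — a singular-value gap also equal to $\|\u\|\,\|\v\|$. Write $X = \E X + W$, with $W$ the centred noise matrix and $\|\cdot\|$ the spectral norm; let $n$ be the number of rows of $X$ (half the data matrix, per the discussion preceding the lemma), the resulting factor $2$ being absorbed into constants below. First I would apply the Davis--Kahan $\sin\Theta$ theorem \citep{davis1970rotation} to the positive semidefinite pair $X^\top X$ and $(\E X)^\top\E X = \|\u\|^2\v\v^\top$, whose unit top eigenvectors are $\hat{\v}$ and $\v/\|\v\|$: the perturbation $X^\top X - (\E X)^\top\E X = (\E X)^\top W + W^\top\E X + W^\top W$ has spectral norm at most $2\sigma_1(\E X)\|W\| + \|W\|^2$, and the eigenvalue gap of $(\E X)^\top\E X$ is $\sigma_1(\E X)^2$. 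After fixing the sign of $\hat{\v}$ so that $\langle\hat{\v},\v\rangle\ge 0$ (consistent with the Perron--Frobenius choice, since $X$ and $\v$ are nonnegative) and passing from $\|\sin\Theta\|$ to the Euclidean distance between unit vectors (a factor $\sqrt2$), Davis--Kahan gives, for a sufficiently large absolute constant $c_0$,
\begin{align*}
\left\| \frac{\hat{\v}}{\|\hat{\v}\|} - \frac{\v}{\|\v\|} \right\| \;\le\; \frac{c_0\,\|W\|}{\sigma_1(\E X)} \qquad\text{whenever } \|W\| \le \frac{\sigma_1(\E X)}{c_0} .
\end{align*}
Since $u_i,v_j\in[c,C]$, $\|\u\|\ge c\sqrt n$ and $\|\v\|\ge c\sqrt m$, so $\sigma_1(\E X)\ge c^2\sqrt{nm}$; one then checks that for every $\ep\in(0,1)$
\begin{align*}
\left\{ \left\| \tfrac{\hat{\v}}{\|\hat{\v}\|} - \tfrac{\v}{\|\v\|} \right\| > \ep \right\} \;\subseteq\; \left\{ \|W\| > \tfrac{c^2}{c_0}\,\ep\sqrt{nm} \right\} ,
\end{align*}
because on the complement of the right-hand event $\|W\|\le\tfrac{c^2}{c_0}\ep\sqrt{nm}\le\tfrac{c^2}{c_0}\sqrt{nm}\le\sigma_1(\E X)/c_0$, so Davis--Kahan applies and bounds the left-hand norm by $c_0\|W\|/\sigma_1(\E X)\le c^2\ep\sqrt{nm}/\sigma_1(\E X)\le\ep$.

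Next I would bound $\|W\|$, the spectral norm of a matrix with independent mean-zero entries bounded by $1$ and with $\Var(X_{ij}) = u_iv_j(1-u_iv_j)\le\tfrac14$. Writing $W = \sum_{i,j} W_{ij}\,e_ie_j^\top$ and invoking the matrix Bernstein inequality: each summand has norm at most $R=1$, and the two matrix-variance parameters $\bigl\|\sum_{i,j}\E[W_{ij}^2]\,e_ie_i^\top\bigr\|$ and $\bigl\|\sum_{i,j}\E[W_{ij}^2]\,e_je_j^\top\bigr\|$ are at most $m/4$ and $n/4$, hence both at most $\tfrac14\max(m,n)$, giving
\begin{align*}
\Pr\bigl(\|W\|\ge t\bigr) \;\le\; (m+n)\exp\!\left( \frac{-t^2/2}{\tfrac14\max(m,n)+t/3} \right).
\end{align*}
Setting $t=\tfrac{c^2}{c_0}\ep\sqrt{nm}$: for $\ep\in(0,1)$ one has $t\le\sqrt{nm}\le\max(m,n)$, so the denominator is at most a numerical multiple of $\max(m,n)$, while $t^2/\max(m,n) = \tfrac{c^4}{c_0^2}\ep^2\,nm/\max(m,n) = \tfrac{c^4}{c_0^2}\ep^2\min(m,n)$ since $nm/\max(m,n)=\min(m,n)$. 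Combining with the set inclusion above proves part~(a), with $C_2$ a numerical multiple of $c^4/c_0^2$.

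For part~(b) I would integrate the tail of part~(a). With $Z := \|\hat{\v}/\|\hat{\v}\| - \v/\|\v\|\| \le 2$ and $t_\star := \sqrt{\log(m+n)/(C_2\min(m,n))}$, the threshold at which the prefactor $(m+n)$ exactly cancels the exponential,
\begin{align*}
\E Z = \int_0^\infty \Pr(Z>t)\,dt \;\le\; t_\star + (m+n)\!\int_{t_\star}^{\infty}\! e^{-C_2 t^2\min(m,n)}\,dt \;\le\; t_\star + \frac{1}{\sqrt{C_2\min(m,n)}} ,
\end{align*}
the last integral being controlled via $t^2-t_\star^2\ge t_\star(t-t_\star)$; since $\log(m+n)\ge 1$ this yields $\E Z\le C_3\sqrt{\log(m+n)/\min(m,n)}$ with $C_3$ a numerical multiple of $1/\sqrt{C_2}$. (One can in fact drop the logarithm using $\E Z\le c_0\,\E\|W\|/(c^2\sqrt{nm})$ together with $\E\|W\| = O(\sqrt{\max(m,n)})$, but the stated bound already suffices for Theorem~\ref{thm:final_confidence}.)

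The step I expect to be the main obstacle is obtaining the exponent $\min(m,n)$ rather than $\max(m,n)$ or $m$: this rests on pitting the $\Theta(\sqrt{nm})$ singular-value gap appearing in the Davis--Kahan denominator against the $\Theta(\sqrt{\max(m,n)})$ spectral norm of $W$, through the identity $nm/\max(m,n)=\min(m,n)$. The remaining technical points — verifying that the linear-in-$t$ term in the Bernstein denominator never dominates for $\ep\in(0,1)$, and pinning down the sign of the leading singular vector — are routine.
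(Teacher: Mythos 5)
Your proposal is correct and follows essentially the same route as the paper: Davis--Kahan applied to $X^\top X$ versus $\|\u\|^2\v\v^\top$ with the rank-one eigenvalue gap $\|\u\|^2\|\v\|^2\ge c^4 mn$, reduction of part (a) to a tail bound on $\|W\|_{op}$ via matrix Bernstein, and the identity $mn/\max(m,n)=\min(m,n)$ to land the exponent. The only (minor) deviation is in part (b), where you integrate the tail from part (a) instead of taking expectations of the deterministic Davis--Kahan bound using the expectation-form matrix Bernstein bounds on $\E\|W\|_{op}$ and $\E\|W\|_{op}^2$ as the paper does; both are standard and yield the same $\sqrt{\log(m+n)/\min(m,n)}$ rate.
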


Given (\ref{eq:hoeffding}) and the bounds in Lemma~\ref{lem:daviskahan}, it is straightforward to establish Theorem~\ref{thm:final_confidence}, as we do next.
Fixing some $\ep \in (0,1)$, we have that if
$\ep^2 \min(m,n) > (6C_3/c)^2 \log(m+n)$, then
Lemma~\ref{lem:daviskahan}(b) implies that
\aln{
\frac{2}{c}\,
\E \left\| \frac{\hat{\v}}{\|\hat{\v}\|} - \frac{\v}{\|\v\|} \right\| \leq
\frac{2C_3}{c} \sqrt{\frac{\log(m+n)}{\min(m,n)}} < \frac{\ep}{3}.
}
Hence, if $\ep^2 \min(m,n) > (6 C_3/c)^2 \log(m+n)$,
\al{
\Pr\left(|\hat{u}_i - u_i| > \epsilon\right) & \leq 
\Pr\left( \left| \sum_{j=1}^m (X_{i,j}-u_iv_j) 
\E\left[ \frac{\hat{v}_j}{\|\hat{\v}\|} \right]  
\right| > \frac{\|\v\|\ep}{3} \right)
+ \Pr\left( \left\| \frac{\hat{\v}}{\|\hat{\v}\|} - \frac{\v}{\|\v\|} \right\|  \ge \frac{c \ep}3 \right) \nonumber \\
& \leq 2 \exp\left( -\frac{c^2 m \ep^2}{18}\right) + 
(m+n) \exp \left( -C_2 \frac{c^2 \epsilon^2  \min(m,n) }{9} \right)
\nonumber \\
& \leq (m+n+2) \exp \left( -C_4 \min(m,n) {\epsilon^2} \right). \label{eq:thm1proof}
}
Notice that \eqref{eq:thm1proof} is a vacuous statement whenever $C_4 \ep^2 \min(m,n) < \log(m+n)$, as the right-hand side is greater than $1$.
Hence, if we replace $C_4$ with $C_1 = \min[C_4,(6C_3/c)^{-2}]$, the inequality holds for all $m$ and $n$.
The result in Theorem~\ref{thm:final_confidence} then follows by assuming $m \leq n$.

\section{Leveraging confidence intervals for adaptivity}
\label{sec:adaptive}


In
the pairwise sequence alignment problem, one is typically only interested in identifying pairs of reads with large overlaps.
Hence, by discarding pairs of reads with a small overlap based on a coarse alignment estimate and adaptively picking the pairs of reads for which a more accurate estimate is needed, it is possible to save significant computational resources.
Similarly, in crowdsourcing applications, one may be interested in employing adaptive schemes in order to effectively use worker resources to identify only the most popular items.

We consider two natural problems that can be addressed within an adaptive framework.
The first one is the identification of the top-$k$ largest alignment scores.
In the second problem, the goal is to return a list of reads with high pairwise alignment to the reference, i.e., all reads with $u_i$ above a certain threshold.
More generally, we consider the task of identifying a set of reads including \emph{all} reads with pairwise alignment above $\alpha$ and no reads with pairwise alignment below $\beta$, for some $\beta \leq \alpha$.
Adaptivity in the 
first problem can be achieved by casting the problem as a top-$k$ multi-armed bandit problem,
while the second problem can be cast as a 
Thresholding Bandit problem \citep{locatelli2016optimal}.

\subsection{ Identifying the top-$k$ alignments: }
We consider the setting where we wish to find the $k$ largest pairwise alignments with a given read.
We assume that we have a total computational budget of $T$ min-hash comparisons.
Notice that the regime where $T < n$ is uninteresting as we cannot even make one min-hash comparison per read.
When $T = \Omega(n)$, a simple non-adaptive approach is to divide our budget $T$ evenly among all reads (as done in \citep{baharav2019spectral}). 
This gives us an $n \times \frac{T}{n}$ min-hash collision matrix $X$, from which we can estimate $\hat \u$ using Algorithm~\ref{alg:spectral} and choose the top $k$ alignments based on their $\hat u_i$ values.
Let $u_{(1)} \ge u_{(2)} \ge ... \ge u_{(n)}$ be
the sorted entries of the true $\uvec$ and define $\Delta_i = u_{(i)} - u_{(k)}$ for $i=1,...,n$.
Notice that the non-adaptive approach recovers $u_{(1)},...,u_{(k)}$ correctly if
each $\hat{u}_i$ stays within $\Delta_{k+1}$ of its true value $u_i$.
From the union bound and Theorem~\ref{thm:final_confidence}, 
\begin{align}
    \P(\text{failure}) &\le \sum_{i=1}^n \P\left(|\hat{u}_{i} - u_i| > \Delta_{k+1}/2\right) 
    \le 3n^2 \exp \left(- \frac{C_{1}}{4}\frac{\Delta_{k+1}^2 T}{n}  \right),
    \label{eq:nonadaptiveprob}
\end{align}
if $n \le T \leq n^2$.
Hence, the budget required to achieve an error probability of $\delta$ is
\al{
T=O\left( {\Delta_{k+1}^{-2}} n  \log \left( \tfrac{n}{\delta} \right)\right).
\label{eq:nonadaptiveT}
}
Moreover, from \eqref{eq:nonadaptiveprob}, we see that a budget $T = n\log^\beta n$, $\beta > 1$, allows you to correctly identify the top-$k$ alignments if
$(\Delta_{k+1}^2 T)/n \approx \log n$, or $\Delta_{k+1} \approx {\log^{-(\beta-1)/2}}n$.
Hence, the budget $T$ places a constraint in the minimum gap $\Delta_{k+1}$ that can be resolved.

Next we propose an adaptive way to allocate the same budget $T$.
Algorithm \ref{alg:bandit_topk} builds on an approach by \citet{Karnin2013}, but incorporates the spectral estimation approach from Section~\ref{sec:analysis}.
The algorithm assumes the regime $n \log n < T < n^2$.

\begin{algorithm}
\begin{algorithmic}[1]
\caption{Adaptive Spectral Top-$k$ Algorithm}\label{alg:bandit_topk}
\State \textbf{Input:} $T$, $k$ 
\State $\text{Initialize } \CI_0 \gets \{1,2,...,n\}$
\Comment{Initial set of candidates}
\For{$r=0$  \textbf{ to } $r_{\rm max} \triangleq \lceil \log_2 \frac{n}{\sqrt{T}.  } \rceil -1$}
\State 
$\displaystyle t_r \gets \left\lfloor \frac{T}{2 |\CI_r|\lceil \log_2 \frac{n}{ \sqrt{T}} \rceil} \right\rfloor$
\Comment{Number of samples to be taken}
\State Obtain a binary matrix $X^{(r)} \in \{0,1\}^{|\CI_r| \times t_r}$ and corresponding $\|\v^{(r)}\|$
\State Use Algorithm \ref{alg:spectral} to compute estimates
$\hat{\uvec}^{(r)}$ for $X^{(r)}$
\State{Set $\CI_{r+1}$ to be the $\lceil |\CI_r|/2 \rceil$ coordinates in $\CI_r$ with largest $\hat{\uvec}^{(r)}$}
\EndFor
\State{\textbf{Clean up:} Use $t_{r_{\max}+1} = \frac{T}{2}$, and compute $\hat{\uvec}^{(r_{\rm max}+1)}$ as above}
\State \textbf{return} the $k$ coordinates of $\CI_{r_{\rm max}+1}$ with the largest $\hat{\uvec}^{(r_{\rm max}+1)}$
\end{algorithmic}
\end{algorithm}

At the $r$-th iteration, Algorithm \ref{alg:bandit_topk} uses $t_r$ new hash functions and computes the min-hash collisions for the reads in $\CI_r$, which are represented by the matrix $X^{(r)}$.
Notice that we assume that the $t_r$ min-hashes in each iteration are different, which makes observation matrices $X^{(r)}$ all independent.
Also, we assume that the $\ell_2$ norm of the right singular vector of $\E X^{(r)}$, $\|\v^{(r)}\|$, can be obtained exactly at each iteration of the algorithm.
As discussed in Section~\ref{sec:model}, this makes the model identifiable and can be emulated in practice with calibration reads.
At each iteration, Algorithm \ref{alg:bandit_topk} eliminates half of the reads in $\CI_r$.
After $r_{\rm max}+1$ iterations, the number of remaining reads satisfies
$\sqrt{T}/2 \leq |\CI_{r_{\rm max}+1}| \leq 2\sqrt{T}$,
and the total budget used
is $\sum_{r=0}^{r_{\rm max}} |\CI_r| t_r \le T/2$.
Finally, in the ``clean up'' stage, we use the remaining $T/2$  budget to 
obtain the top $k$ among the approximately $\sqrt{T}$ remaining
items.
Notice that the final observation matrix is approximately $\sqrt{T} \times \sqrt{T}$.

In order to analyse the performance of Algorithm~\ref{alg:bandit_topk}, 
we proceed simiarly to
\citet{Karnin2013}, and define
$
    H_2 = \max_{i > k} \frac{i}{\Delta_i^2}.
$
We then have the following performance guarantee (proof in Appendix 
\ref{app:proof}).

\begin{thm} \label{thm:bandit_topk}
Given budget $n \log n\le T \le n^2$ and assuming $2k < \sqrt{T}$, 
Algorithm~\ref{alg:bandit_topk} correctly identifies the top $k$ alignments
with probability at least
\begin{align}
    &1 - 18kn \log n \exp \left(- \frac{C_1}{64} \frac{T  }{H_2\log n}\right) - 12n^2 \exp \left( -\frac{C_1}{16}\Delta_{k+1}^2 \sqrt{T}\right)
    \label{eq:adaptiveprob}
\end{align}
Moreover, for sufficiently small $\delta$, Algorithm~\ref{alg:bandit_topk} achieves an error probability of at most $\delta$ with budget
\al{
T=O\left( H_2 \log n \log \left( nk\frac{\log n}{\delta} \right)  + 
\Delta_{k+1}^{-4} \log^2 \left(\frac{n^2}{\delta} \right)
\right).
\label{eq:adaptiveT}
}
\end{thm}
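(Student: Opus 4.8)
The plan is to mirror the Sequential Halving analysis of \citet{Karnin2013}, with the elementwise confidence bound of Theorem~\ref{thm:final_confidence} playing the role of the Hoeffding bound used there. I would first fix notation: write $\CS^\star = \{i : u_i \ge u_{(k)}\}$ for the true top-$k$ set, and observe that since a fresh set of hash functions is drawn at each iteration, the matrices $X^{(0)},\dots,X^{(r_{\max}+1)}$ are mutually independent; hence, conditioned on the (deterministically sized) candidate set $\CI_r$, the matrix $X^{(r)}$ is still a rank-one-model observation matrix on the rows $\CI_r$ with $t_r$ columns, and $t_r$ is a fixed constant. The event that the algorithm returns a set other than $\CS^\star$ is contained in the union of the events ``some arm of $\CS^\star$ is dropped in halving round $r$'' (for $r=0,\dots,r_{\max}$, conditioned on $\CS^\star\subseteq\CI_r$) and ``the clean-up round misranks some arm'' (conditioned on $\CS^\star\subseteq\CI_{r_{\max}+1}$). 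A union bound over the $r_{\max}+1 = \lceil\log_2(n/\sqrt T)\rceil\le\log_2 n$ halving rounds and over the $k$ arms of $\CS^\star$ then reduces everything to a single-round, single-arm survival estimate plus one clean-up estimate.

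The per-round survival estimate is where the real work lies. Fix a round $r$, write $n_r = |\CI_r|$, and fix $a\in\CS^\star$ with $\CS^\star\subseteq\CI_r$. Arm $a$ is dropped only if at least $\lceil n_r/2\rceil$ arms $j\ne a$ of $\CI_r$ satisfy $\hat u^{(r)}_j \ge \hat u^{(r)}_a$; of those, at most $k-1$ are other top-$k$ arms and at most $\lceil n_r/4\rceil - k$ are non-top-$k$ arms whose true values rank among the top $\lceil n_r/4\rceil$ of $\CI_r$, so (using $2k<\sqrt T\le 2n_r$, and absorbing into constants the regime where $k$ is a nonnegligible fraction of some $n_r$) at least $N_r := |\{j\in\CI_r : \mathrm{rank}_{\CI_r}(j) > \lceil n_r/4\rceil,\ \hat u^{(r)}_j\ge\hat u^{(r)}_a\}| \ge n_r/4$. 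Each $j$ counted by $N_r$ has global rank $> n_r/4\ge k$, hence gap $\Delta_j\ge\Delta_{\lceil n_r/4\rceil}$ and, by definition of $H_2$, $\Delta_j^2 \ge j/H_2 > n_r/(4H_2)$. I would then introduce the good event $G_a = \{\hat u^{(r)}_a > u_a - \Delta_{\lceil n_r/4\rceil}/2\}$: on $G_a$, every such $j$ with $\hat u^{(r)}_j\ge\hat u^{(r)}_a$ satisfies $\hat u^{(r)}_j > u_a - \Delta_{\lceil n_r/4\rceil}/2 \ge u_j + \Delta_j/2$, so Theorem~\ref{thm:final_confidence} (with $m = t_r$, $\epsilon = \Delta_j/2$) gives $\Pr(\hat u^{(r)}_j - u_j > \Delta_j/2) \le 3n_r\exp(-C_1 t_r\Delta_j^2/4)$. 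Since $t_r \gtrsim T/(2n_r\log_2 n)$ and $\Delta_j^2 > n_r/(4H_2)$, each summand is at most $3n_r\exp(-C_1 T/(32 H_2\log n))$, whence $\E[N_r\one_{G_a}] \le 3n_r^2\exp(-C_1 T/(32 H_2\log n))$; Markov's inequality bounds $\Pr(N_r\ge n_r/4,\,G_a) \le 12n_r\exp(-C_1 T/(32 H_2\log n))$, and $\Pr(G_a^c) \le 3n_r\exp(-C_1 T/(32 H_2\log n))$ since $\Delta_{\lceil n_r/4\rceil}^2 > n_r/(4H_2)$. Thus a single top-$k$ arm fails to survive round $r$ with probability $O(n\exp(-cT/(H_2\log n)))$. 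This step is the \emph{main obstacle}: exactly as in Sequential Halving, a naive union bound over all of $\CI_r$ fails because up to half the arms may have gaps too small to resolve, so one must isolate the high-gap arms and route the tail estimate through Markov's inequality --- and one must also carry the $3n_r$ prefactor of Theorem~\ref{thm:final_confidence} through that Markov step, which is what turns a $k\log n$ into the $kn\log n$ prefactor appearing in \eqref{eq:adaptiveprob}.

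For the clean-up round I would condition on $\CS^\star\subseteq\CI_{r_{\max}+1}$, where $\sqrt T/2 \le |\CI_{r_{\max}+1}| \le 2\sqrt T$ and the remaining $T/2$ budget gives roughly $\sqrt T/2$ samples per surviving arm. Theorem~\ref{thm:final_confidence} with $\epsilon = \Delta_{k+1}/2$ and a union bound over the $\le 2\sqrt T$ survivors shows that, with probability at least $1 - 12 n^2\exp(-\tfrac{C_1}{16}\Delta_{k+1}^2\sqrt T)$, every $\hat u^{(r_{\max}+1)}_i$ lies within $\Delta_{k+1}/2$ of $u_i$; on this event the top-$k$ arms ($u_i\ge u_{(k)}$) are strictly separated in the estimates from the rest ($u_i\le u_{(k+1)} = u_{(k)}-\Delta_{k+1}$), so the algorithm outputs exactly $\CS^\star$. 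Combining the per-round bound, the union bound over $\le\log_2 n$ rounds and $k$ arms, and the clean-up bound, and collecting constants, gives \eqref{eq:adaptiveprob}. Finally, \eqref{eq:adaptiveT} follows by forcing each term of \eqref{eq:adaptiveprob} below $\delta/2$: the first term requires $T = \Omega(H_2\log n\cdot\log(nk\log n/\delta))$ and the second requires $\sqrt T = \Omega(\Delta_{k+1}^{-2}\log(n^2/\delta))$, i.e.\ $T = \Omega(\Delta_{k+1}^{-4}\log^2(n^2/\delta))$; adding the two gives the claimed budget. The ``sufficiently small $\delta$'' hypothesis is exactly what places the resulting $T$ in the admissible range $n\log n\le T\le n^2$ and validates the approximations used above --- dropping the floor in $t_r$, the mild overshoot of the hypothesis $m\le n$ when Theorem~\ref{thm:final_confidence} is applied in the clean-up round, and the requirement that $k$ be small relative to every $n_r$.
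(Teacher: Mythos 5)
Your proposal is correct and follows essentially the same route as the paper's proof: Sequential Halving \`a la \citet{Karnin2013}, with Theorem~\ref{thm:final_confidence} substituting for Hoeffding, the count $N_r$ of over-performing low-ranked arms controlled via Markov's inequality (the paper splits $\Pr(\hat u_j \ge \hat u_k)$ two-sidedly inside $\E[N_r]$ rather than conditioning on your good event $G_a$, but this is the same decomposition), the same clean-up analysis, and the same inversion to obtain the budget. The only differences are bookkeeping of constants.
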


Comparing \eqref{eq:adaptiveprob} and \eqref{eq:adaptiveT} with the non-adaptive counterparts \eqref{eq:nonadaptiveprob} and \eqref{eq:nonadaptiveT} requires a handle on $H_2$. 
This quantity captures how difficult it is to separate the top $k$ alignments, and satisfies 
$(k+1)\Delta_{k+1}^{-2}\le H_2\le n\Delta_{k+1}^{-2}$.
The extreme case $H_2 \approx n\Delta_{k+1}^{-2}$ occurs when all of the $n-k$ suboptimal items have very similar qualities.
In this case, adaptivity is not helpful.
However, when $u_{(k+1)}$ is large compared to other non-top-$k$ alignments, we are in the 
$H_2 \approx k \Delta_{k+1}^{-2}$ regime. 
Then the budget requirements are essentially $T = O(k \Delta_{k+1}^{-2} \log^2 n+ \Delta_{k+1}^{-4} \log^2 n)$, which is $O(\Delta_{k+1}^{-4} \log^2 n)$ for $k,\delta$ constant.
Furthermore, in the $H_2 \approx k \Delta_{k+1}^{-2}$ regime, a budget $T = n\log^\beta n$, $\beta > 2$, allows you to correctly identify the top-$k$ alignments with a gap of
$\Delta_{k+1} \approx n^{-1/4}$ which is  significantly  smaller than the $\Delta_{k+1} \approx {\log^{-(\beta-1)/2}}n$ afforded in the non-adaptive case. 
As a concrete example, suppose that out of the $n$ alignments, there are $k$ highly overlapping reads with $u_i = C$, $k$ moderately overlapping reads with $u_i = C-n^{-1/4}$, and $n-2k$ reads with no overlap and $u_i = c$.
In this case, Algorithm~\ref{alg:bandit_topk} requires a budget of $T = O \left(n\log^2 \left( \frac{n}{\delta}\right) \right)$, while the non-adaptive approach requires $T = O \left((n^{3/2} \log \left( \frac{n}{\delta}\right)\right)$.

\subsection{Identifying all alignments above a threshold:}
In this section, we develop a bandit algorithm to return a
set of coordinates in $\{1,...,n\}$ such that with high probability all coordinates with $u_i \ge \beta$
are returned and no coordinates with $u_i \le \alpha$
are returned, for some $\beta > \alpha$. 
We assume that  $\beta-\alpha > \sqrt{\frac{12 \log n}{C_4 n}}$.
The algorithm
and analysis follow similarly to \citet{locatelli2016optimal}.

\begin{lem}\label{lem:CI}
For any $\Gamma \in \R$, our estimates $\hat{u}_i$ from Algorithm \ref{alg:spectral} run with an $n\times m$ matrix with $m, n$ such that
\begin{align}
    \min(m, n) &\ge \frac{\log\left(\frac{1}{\delta}\right) + \log (m+n+2)}{\Gamma^2C_4},\label{eq:CI}
\end{align}
will have $|\hat{u}_i - u_i| \le \Gamma$ with probability 
at least $1 - \delta$.
Thus, if 
\begin{align}
    \min(m, n) &\ge  \frac{3\log n + 2\log (m+n) }{\Gamma^2C_4}
    \label{eq:CI2}
\end{align}
then with probability $1- \frac{1}{n^2}$, $|\hat{u}_i - u_i| \le \Gamma$
for all $i \in \{1,..., n\}$.
\end{lem}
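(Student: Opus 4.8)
The plan is to obtain Lemma~\ref{lem:CI} as a direct corollary of the per-coordinate concentration bound already established in the proof of Theorem~\ref{thm:final_confidence}, by inverting that bound to solve for the required number of samples and then boosting to a uniform statement via a union bound.

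First I would recall the tail bound \eqref{eq:thm1proof} derived en route to Theorem~\ref{thm:final_confidence}: for $0<\Gamma<1$, provided $\Gamma^2\min(m,n)$ exceeds the threshold $(6C_3/c)^2\log(m+n)$ that makes the Davis--Kahan expectation term in Lemma~\ref{lem:elementwise} at most $\Gamma/3$, one has $\Pr(|\hat u_i - u_i| > \Gamma) \le (m+n+2)\exp(-C_4\,\Gamma^2\min(m,n))$. Now impose hypothesis \eqref{eq:CI}, i.e.\ $\min(m,n)\ge(\log(1/\delta)+\log(m+n+2))/(\Gamma^2 C_4)$. Taking the logarithm of the right-hand side of the tail bound gives
\[
\log(m+n+2) - C_4\Gamma^2\min(m,n) \;\le\; \log(m+n+2) - \log(m+n+2) - \log(1/\delta) \;=\; \log\delta ,
\]
so the tail probability is at most $\delta$, which is the first claim. (The one place requiring a touch of care is checking that \eqref{eq:CI}, possibly after harmlessly replacing $C_4$ by $\min[C_4,(6C_3/c)^{-2}]$ exactly as in the passage from \eqref{eq:thm1proof} to Theorem~\ref{thm:final_confidence}, already guarantees the side condition $\Gamma^2\min(m,n) > (6C_3/c)^2\log(m+n)$.)

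For the second claim I would apply the first with $\delta$ replaced by $1/n^3$, so that $\log(1/\delta)=3\log n$. Since $m+n+2\le(m+n)^2$ whenever $m+n\ge 2$, we have $\log(m+n+2)\le 2\log(m+n)$, and hence the stronger hypothesis \eqref{eq:CI2} implies the instance of \eqref{eq:CI} with $\delta=1/n^3$. Thus $\Pr(|\hat u_i - u_i| > \Gamma)\le n^{-3}$ for each fixed $i$, and a union bound over $i\in\{1,\dots,n\}$ gives $\Pr(\exists i:\ |\hat u_i - u_i| > \Gamma)\le n\cdot n^{-3}=n^{-2}$, which is the stated uniform guarantee.

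The argument is essentially bookkeeping on top of Theorem~\ref{thm:final_confidence}, so I do not expect a substantive obstacle; the main point of attention is ensuring the $C_4$-version of the tail bound (rather than the $C_1$-version appearing in the final statement of the theorem) is legitimate under these hypotheses, which is handled by the side-condition check above. A secondary remark is that Lemma~\ref{lem:CI} is stated without the assumption $m\le n$ used in Theorem~\ref{thm:final_confidence}; this is immaterial because the bound \eqref{eq:thm1proof} and its ingredients, Lemma~\ref{lem:daviskahan} and \eqref{eq:hoeffding}, are already written in terms of $\min(m,n)$, the restriction $m\le n$ serving only to tidy the final form of the theorem.
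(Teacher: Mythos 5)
Your proposal is correct and follows essentially the same route as the paper: invert the tail bound \eqref{eq:thm1proof} to get \eqref{eq:CI}, then substitute $\delta = 1/n^3$, use $\log(m+n+2)\le 2\log(m+n)$, and union bound over the $n$ coordinates to get \eqref{eq:CI2}. Your side remark about verifying the condition $\Gamma^2\min(m,n) > (6C_3/c)^2\log(m+n)$ (or equivalently working with $C_1=\min[C_4,(6C_3/c)^{-2}]$ rather than $C_4$) is a point the paper's one-line proof glosses over, and you handle it appropriately.
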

\begin{proof}
Eq.~\eqref{eq:CI} follows by inverting the
error bound from Eq.~\eqref{eq:thm1proof}. Eq.~\eqref{eq:CI2} follows by just substituting $\delta = \frac{1}{n^3}$ and noting that $\log (m+n+2) \le 2 \log (m+n)$ as long as $m+n \ge 4$.
\end{proof}

Notice that a naive non-adaptive approach to this thresholding problem would consist of applying Algorithm~\ref{alg:spectral} to the entire observation matrix, with enough enough workers such that the confidence intervals are smaller than $\frac{\beta-\alpha}{2}$
and return coordinates with value more than $\frac{\alpha+\beta}{2}$.
In that case, Lemma \ref{lem:CI} implies that $\frac{12}{C_4}\frac{n\log n}{(\beta-\alpha)^2}$ workers' responses are enough to succeed with probability at least $1- \frac{1}{n}$.
In Algorithm~\ref{alg:thresh}, we propose an adaptive way of performing the same task.

\begin{algorithm}[H]
\begin{algorithmic}[1]
\caption{Adaptive Spectral Thresholding Algorithm}\label{alg:thresh}
\State \textbf{Input:} Range $[\alpha,\beta]$.
\State $\text{Initialise } S_0 \gets \{1,...,n\}$
\Comment{Set of coordinates initially under consideration}
\State $\text{Initialise } A \gets \emptyset$ 
\Comment{Set of coordinates initially accepted}
\State{$t_{-1} \leftarrow \frac{12\log n}{C_4}$}
\Comment{Initial number of workers recruited. $C_4$ from Eq. \eqref{eq:CI}}
\For{$r=0$ \textbf{to} $\lceil \log_2 \frac{1}{\beta-\alpha} \rceil -1$}
\State{$t_r = 4 t_{r-1}$} 
\Comment{Number of workers to be recruited}
\State Obtain a binary response matrix $X^{(r)} \in \{0,1\}^{|S_r| \times t_r}$ and corresponding $\|\v^{(r)}\|$
\State Use Algorithm \ref{alg:spectral} to compute estimates
$\hat{\uvec}^{(r)}$ for $X^{(r)}$
\State{Construct confidence intervals $C(t_r)$ (same for every question)}
\State{Construct accepted set $C_{\rm acc} = \{i \in S_r : \hat{u}^{(r)}_i - C(t_r) > \alpha\}$}
\State{Construct rejected set $C_{\rm rej} = \{i \in S_r :  \hat{u}^{(r)}_i + C(t_r) < \beta\} $} 
\State{Set $S_{r+1} = S_r \setminus \left\{C_{\rm rej} \cup C_{\rm acc} \right\}$}
\If{$|S_{r+1}| < \frac{12}{(\beta-\alpha)^2 C_{4}} \log n $}
\State {Let $\CI$ be $\frac{12}{(\beta-\alpha)^2 C_{4}} \log n - |S_{r+1}|$  coordinates of $C_{\rm rej} \cup C_{\rm acc}$ picked uniformly at random. \label{algLine:uarSet}}
\State {$S_{C} = S_{r+1} \cup \CI$}
\State {$A = A\cup (C_{\rm acc})$}
\State \textbf{break}
\Else
\State{Set $A = A\cup C_{\rm acc}$}
\EndIf
\EndFor
\State \textbf{Clean up:} Use $t_{C
} = \frac{12\log n}{(\beta-\alpha)^2C_4}$ workers with set $S_C$
of questions to obtain estimates $\hat{\uvec}^{(C)}$. Construct 
confidence intervals $C(t_C)$ and accepted set $C_A =\{ i \in S_C : \hat{u}^{(C)}_i - C(t_C) > \alpha\} $
\State {\textbf{return} $A \cup C_A$.}
\end{algorithmic}
\end{algorithm}

\begin{note}
Line 14 is used to make sure that we have $n>m$ in the clean up stage. Selecting $\CI$ uniformly at random from $C_{\rm rej} \cup C_{\rm acc}$ is simply given as a concrete way for the algorithm to run.
\end{note}

Let $\kappa := \left\lfloor\frac{12}{(\beta-\alpha)^2 C_{4}} \log n\right\rfloor$. 
Define 
\begin{align}
    \Gamma_i &= \begin{cases}
    u_i-\alpha & \text{ if } \beta < u_i,\\
    \beta-\alpha & \text{ if } \alpha \le u_i \le \beta, \\
    \beta - u_i & \text{ if } u_i < \alpha.
    \end{cases}\label{eq:Gamma}
\end{align}
See Figure \ref{fig:thresh_bandits} for an illustration.
Further let 
$\Gamma^{(1)}\le \Gamma^{(2)} \le \cdots \le \Gamma^{(n)}$ be
the sorted list of the $\Gamma_i$.
\begin{figure}[ht]
\vspace{-.5cm}
    \centering
    \includegraphics[width=.5\linewidth]{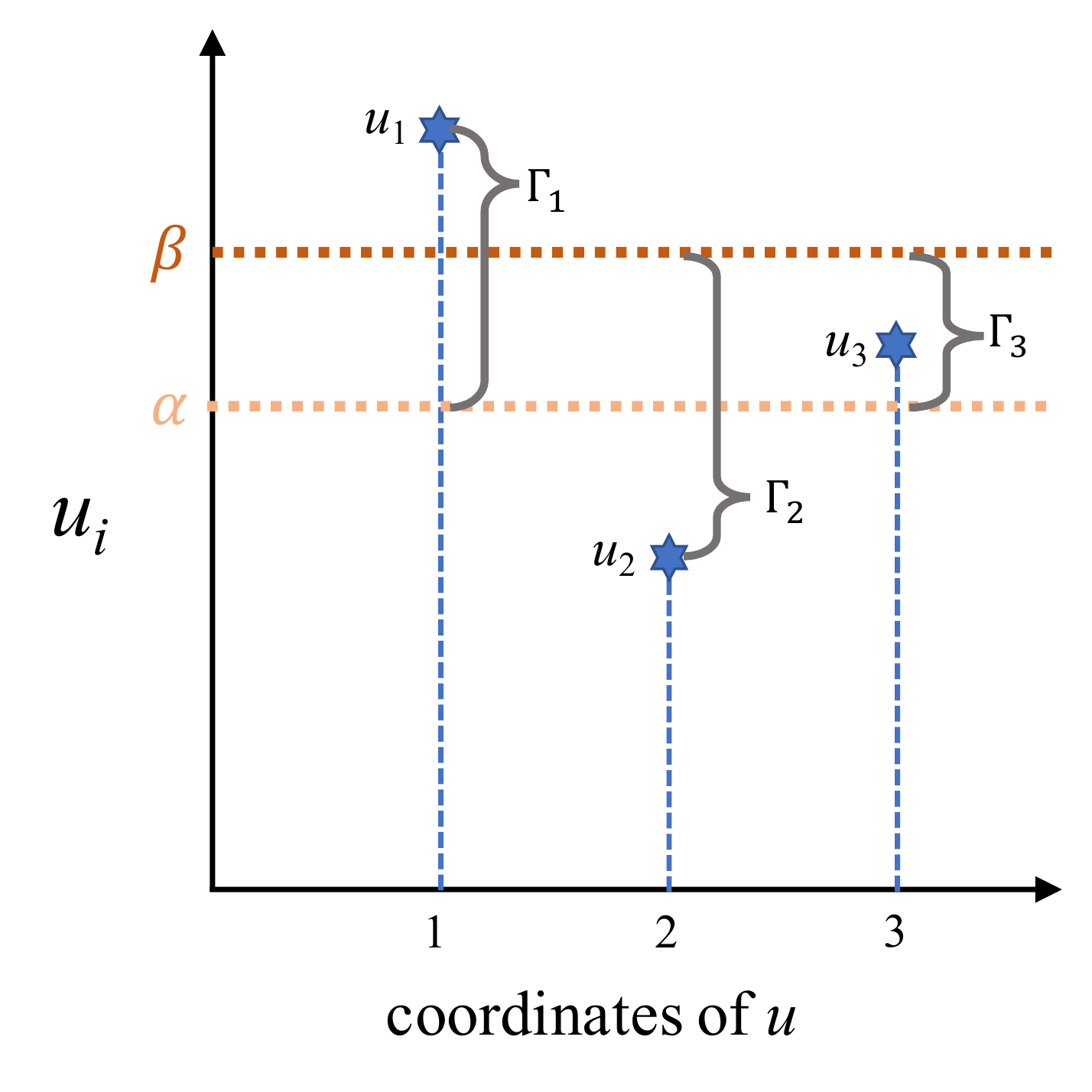}
    \vspace{-.2cm}
  \caption{An illustration of $\Gamma_i$ of Eq.~\eqref{eq:Gamma}}
\label{fig:thresh_bandits}
\vspace{-.5cm}
\end{figure}
\begin{thm}\label{thm:thresh_bdt_main}
Given parameters $\beta$ and $\alpha$ such that $\beta-\alpha > \sqrt{\frac{12 \log n}{C_4 n}}$,
with probability at least $1-\frac{2}{n}$
Algorithm~\ref{alg:thresh}
will output a set of reads $R$ such that 
$\{i: u_i > \beta\} \subseteq R \subseteq \{i: u_i > \alpha \}$ 
and use budget
\begin{align}
    T
    \le 2\left( \frac{12}{C_4}\frac{\log n}{(\beta-\alpha)^2}\right)^2 + \sum_{\ell=\kappa +1}^n \frac{32}{C_4}\frac{\log n}{(\Gamma^{(\ell)})^2}\label{eq:thresholdingtotal1}
\end{align}
 where $\Gamma_i$
denotes the difficulty of classifying $u_i$, with
$\Gamma^{(1)}\le \cdots \le \Gamma^{(n)}$ as
the sorted list of the $\Gamma_i$.
\end{thm}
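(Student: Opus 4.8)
The plan is to run the analysis on the \emph{good event} $\CE$ that, at every round $r$ of the main loop of Algorithm~\ref{alg:thresh} and in the clean-up stage, the estimates returned by Algorithm~\ref{alg:spectral} satisfy $|\hat{u}^{(r)}_i - u_i| \le C(t_r)$ simultaneously for all coordinates still under consideration, where $C(t_r)$ is the confidence half-width furnished by Lemma~\ref{lem:CI} for a matrix with $t_r$ workers. Since $t_{-1} = \tfrac{12\log n}{C_4}$, $t_r = 4\,t_{r-1}$, and the half-width of Lemma~\ref{lem:CI} scales as $1/\sqrt{t_r}$, the half-width essentially halves from round to round, with $C(t_{-1}) \le \tfrac{1}{2}$ and, for the clean-up count $t_C = \tfrac{12\log n}{C_4(\beta-\alpha)^2}$, with $C(t_C) \le \tfrac{\beta-\alpha}{2}$. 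By Lemma~\ref{lem:CI} each round fails with probability at most $1/n^2$, and the hypothesis $\beta-\alpha > \sqrt{12\log n/(C_4 n)}$ caps the number of rounds at $\lceil\log_2\tfrac{1}{\beta-\alpha}\rceil + 1 = O(\log n)$, so a union bound gives $\Pr(\CE)\ge 1 - 2/n$. Everything below is deterministic on $\CE$.

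\textbf{Correctness.} On $\CE$ the accept/reject tests are sound: if $i$ enters the accepted set at round $r$ then $u_i \ge \hat{u}^{(r)}_i - C(t_r) > \alpha$, and if $i$ is rejected then $u_i \le \hat{u}^{(r)}_i + C(t_r) < \beta$; the clean-up acceptance test behaves identically. Hence $R \subseteq \{i : u_i > \alpha\}$ and no $i$ with $u_i > \beta$ is ever rejected. Conversely, any $i$ with $u_i > \beta$ that is not accepted during the main loop is still in $S_C$ at the clean-up stage, where $\hat{u}^{(C)}_i - C(t_C) \ge u_i - 2C(t_C) > \beta - (\beta-\alpha) = \alpha$, so it is accepted. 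Therefore $\{i : u_i > \beta\} \subseteq R \subseteq \{i : u_i > \alpha\}$.

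\textbf{Budget.} The core estimate is a \emph{classification deadline}: with $\Gamma_i$ as in \eqref{eq:Gamma}, a short case analysis of the tests (for $\alpha \le u_i \le \beta$ using $\max(u_i-\alpha,\,\beta-u_i)\ge\tfrac{\beta-\alpha}{2}$) shows that once $C(t_r) < \Gamma_i/4$ coordinate $i$ is accepted or rejected at round $r$. Since the half-width halves each round, $i$ can belong to $S_r$ only for $r \le r_i = O(\log(1/\Gamma_i))$, and since $t_r$ grows geometrically with ratio $4$, the samples charged to $i$ over the main loop total $\sum_{r=0}^{r_i} t_r \le \tfrac{4}{3}\,t_{r_i}$, which one bounds by $\tfrac{32}{C_4}\tfrac{\log n}{\Gamma_i^2}$. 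The loop stops the first time fewer than $\kappa = \lfloor \tfrac{12}{C_4(\beta-\alpha)^2}\log n\rfloor$ coordinates remain unclassified; because the deadline $r_i$ is monotone in $\Gamma_i$, the coordinates still alive then are the $\kappa$ smallest-$\Gamma_i$ ones, so the coordinates classified before the break contribute at most $\sum_{\ell=\kappa+1}^{n}\tfrac{32}{C_4}\tfrac{\log n}{(\Gamma^{(\ell)})^2}$. The remaining (at most $\kappa$) coordinates are padded to exactly $\kappa$ and re-estimated with $t_C$ workers at cost $\kappa\,t_C \le \bigl(\tfrac{12}{C_4}\tfrac{\log n}{(\beta-\alpha)^2}\bigr)^2$; combining this with their bounded share of the main-loop rounds yields the first term $2\bigl(\tfrac{12}{C_4}\tfrac{\log n}{(\beta-\alpha)^2}\bigr)^2$ of \eqref{eq:thresholdingtotal1}, and summing the two contributions gives the claimed bound.

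\textbf{Main obstacle.} The delicate part is the bookkeeping around the break. One must verify that throughout the main loop $\min(|S_r|,t_r) = t_r$, so that the hypothesis of Lemma~\ref{lem:CI} (and the ``$n>m$'' requirement noted after Algorithm~\ref{alg:thresh}) applies at every round and in the clean-up; that the at-most-$\kappa$ coordinates surviving to the break are indeed the ones with the $\kappa$ smallest gaps, so that they may be excluded from the sum $\sum_{\ell=\kappa+1}^n$; and that the clean-up cost together with the tail of the main loop collapses into the single quadratic term. Getting the constants $32$ and $2$ to come out requires careful handling of the geometric sums of the $t_r$, of the ceilings in the round counts, and of the $\log(m+n)$ terms hidden in Lemma~\ref{lem:CI}. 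Structurally the argument follows the thresholding-bandit analysis of \citet{locatelli2016optimal}.
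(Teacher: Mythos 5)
Your proof follows essentially the same route as the paper's: a good event from Lemma~\ref{lem:CI} union-bounded over the rounds to probability $1-\tfrac{2}{n}$, soundness of the accept/reject tests on that event for correctness, and a per-coordinate classification deadline in terms of $\Gamma_i$ combined with the geometric growth of $t_r$ (the $4/3$ factor) and a separate charge for the $\kappa$ clean-up coordinates for the budget. The only divergence is that you trigger classification at half-width $\Gamma_i/4$ where the paper uses $\Gamma_i/2$ (for the middle case it argues that an interval of length less than $\beta-\alpha$ cannot contain both $\alpha$ and $\beta$), which is why your constants would come out somewhat larger than the stated $32$; otherwise the argument matches.
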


The proof of the theorem is in Appendix \ref{app:threhsold}.

\begin{note}
Theorem \ref{thm:thresh_bdt_main} enables us to construct
confidence intervals of $\frac{\beta-\alpha}{2}$ for only the
``most borderline'' $\kappa$ questions and near optimal confidence 
intervals for the rest, while the non-adaptive algorithm would need to construct
confidence intervals of $\frac{\beta-\alpha}{2}$ for all.
\end{note}

\section{Empirical Results}\label{sec:results}

In order to validate the Adaptive Spectral Top-$k$ algorithm, 
we conducted two types of 
experiments:
\begin{enumerate}
    \item controlled experiments on simulated data for a 
crowdsourcing model with symmetric errors;
\item  pairwise sequence alignment experiments on real DNA sequencing data.
\end{enumerate}
We consider the top-$k$ identification problem with $k=5$ in 
both cases. 
We run Algorithm \ref{alg:bandit_topk} with some slight modifications, namely halving until we have fewer
than $2k$ remaining arms before moving to the clean up step, and
compare its performance with the non-adaptive spectral approach.
Further experimental details are in Appendix \ref{app:implementation}.
We measure success in two ways.
First, we consider the error probability of returning the top $k$ items (i.e., any deviation from the top-$k$ is considered a failure).
Second, we consider a less stringent metric, where we allow the algorithm to return its top-$2k$ items, and we consider the fraction of the true top-$k$ items that are present to evaluate performance.
Our code is publicly available online at \href{http://www.github.com/TavorB/adaptiveSpectral}{github.com/TavorB/adaptiveSpectral}.

\begin{figure}[ht] 
    \centering
    \includegraphics[width=.75\linewidth]{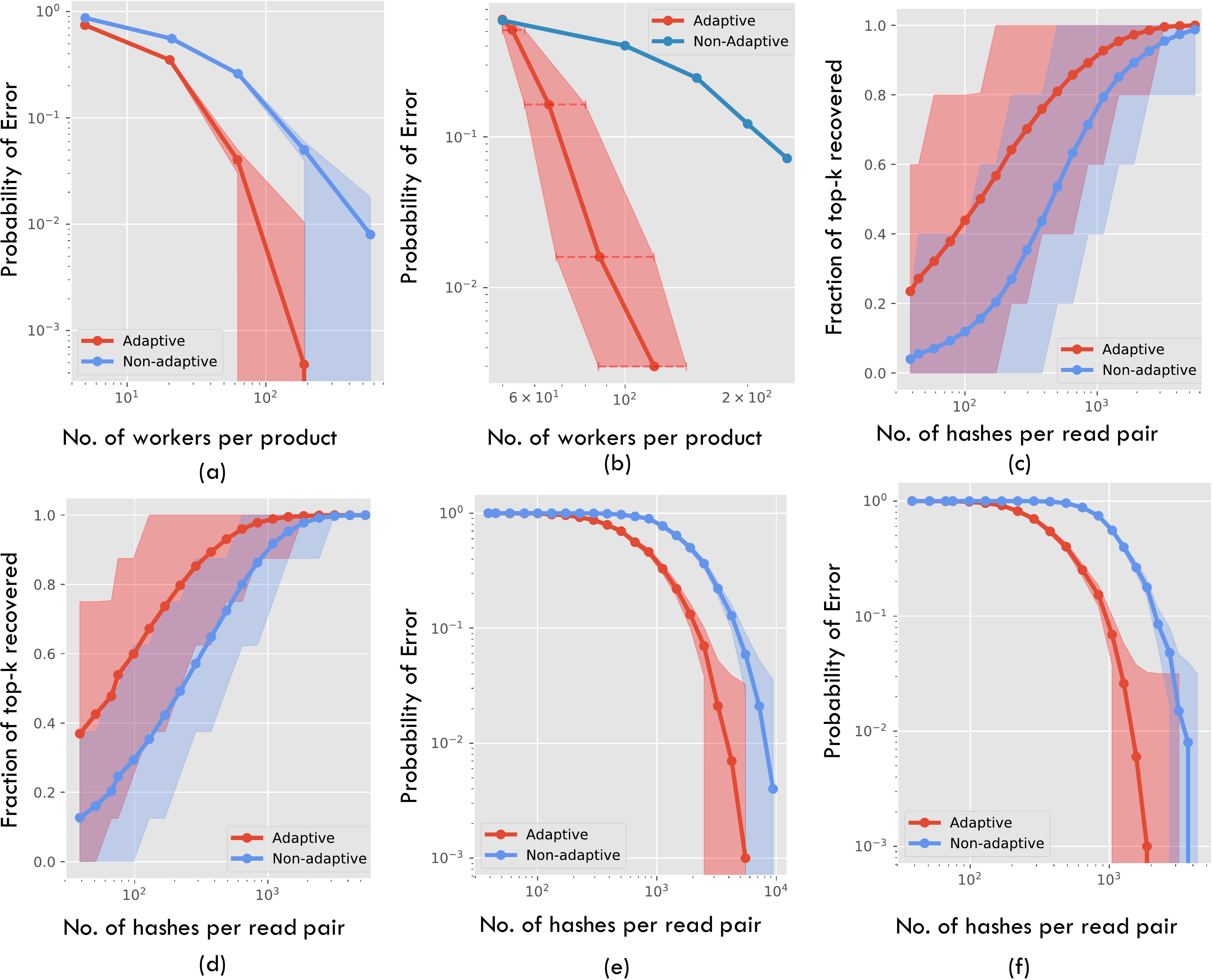} 
  \caption{
  (a) shows the  
probability of error of the
controlled 
crowdsourcing experiment as the number of workers per product is increased, where an error is defined as incorrectly identifying the set of top-$k$ products. (b)~shows the same in the thresholding bandits setup.
 (c) shows the fraction of 
the top-$k$ reads that are in the $2k$ reads returned for
the  \emph{E.~coli} dataset, while 
(d) shows the fraction of the top-$k$ reads that are in the top-$k$
reads returned for the NCTC4174 dataset.
(e)  shows the probability of error of correctly identifying the set of top-$k$ overlaps on the \emph{E. coli} dataset, while (f) shows the same
for the NCTC4174 dataset.
 1000 
trials are conducted for each point. $95\%$ percentiles are shaded around each point in (b), (c) and (d) (note that confidence intervals for (b) are on the x-axis). For (a), (e) and (f), $\frac{1}{\sqrt \text{number of trials}}$ is shaded around each point. For further details, see Appendix \ref{app:implementation}.}
\label{fig:results}
\vspace{-0.5cm}
\end{figure}

{\bf Controlled experiments:}
We consider a crowdsourcing scenario with symmetric errors
as modelled in \eqref{eq:symmetricmodel}.
We want to determine the $5$ best products from a list
of $1000$ products. 
We generate the true product qualities (that is, the $p_i$ parameters)
from a Beta$(1,5)$ distribution independent of each 
other. Each
of the worker abilities $q_j$ is drawn from a
Uniform$(0,1)$ distribution, independent of everything else. 
We consider the problem of top-$5$ 
product detection at various budgets
as shown in Figure~\ref{fig:results}(a) with success rate 
measured by the presence in the top-$10$ items. We see that
the adaptive
algorithm requires significantly fewer worker responses to achieve equal performance to the non-adaptive one. 

In Figure~\ref{fig:results}(b) we consider the same set up 
as above but in the fixed confidence setting, considering the problem
of being able to detect all products that are liked by more than $65\%$
of the population while returning none that is liked by less than $50\%$
of the population. Again, we see that for the same probability of error 
the adaptive algorithm needs far fewer workers.

\textbf{Real data experiments:}
Using the PacBio \emph{E.~coli} data set \citep{PBecoli} that
was examined in \citet{baharav2019spectral} we consider the problem of 
finding, for a fixed reference read, the $5$ reads that 
have the largest alignment with the reference read
in the dataset. We show the fraction of the $5$ reads that are present when returning $10$ reads
in Figure~\ref{fig:results}(c) and the success probability when returning exactly $5$
reads in Figure~\ref{fig:results}(e) (i.e., the probability of returning exactly the top-$5$ reads). 
To achieve an error rate of 0.9\% the non-adaptive algorithm requires over 8500 min-hash comparisons per read, while the adaptive algorithm requires fewer than 6000 per read to reach an error rate of 0.1\%.

We also consider the NCTC4174 dataset of \citep{NCTC3000} and plot the 
fraction correct when returning $5$ reads in  Figure~\ref{fig:results}(d) and the success probability when returning exactly $5$
reads in Figure~\ref{fig:results}(f). The results are qualitatively
similar to what we observe in the case of the \emph{E.~coli} dataset.

\section{Discussion}
Motivated by applications in sequence 
alignment, we considered the problem of efficiently finding the largest elements in the left singular vector of a binary matrix $X$ with $\mathbb{E}[X] = \mathbf{u}\mathbf{v}^\top$.
To utilize the natural spectral 
estimator of $\mathbf{u}$, we designed a method to construct
$\ell_\infty$ confidence intervals around the spectral estimator.
To perform this spectral estimation efficiently, we 
leveraged multi-armed bandit algorithms to adaptively estimate the entries $u_i$ of the leading left singular vector to the necessary degree of accuracy.
We show that this method provides computational gains on both
real data and in controlled experiments.

\clearpage

\section*{Broader Impact} 
Over the last decade, high-throughput sequencing technologies have driven down the time and cost of acquiring biological data tremendously. 
This has caused an explosion in the amount of available genomic data, allowing scientists to obtain quantitative insights into the biology of all living organisms.
Countless tasks -- such as gene expression quantification, metagenomic sequencing, and single-cell RNA sequencing -- heavily rely on some form of pairwise sequence alignment, which is a heavy computational burden and often the bottleneck of the analysis pipeline.
The development of efficient algorithms for this task, which is the main outcome of this paper, is thus critical for the scalability of genomic data analysis.

From a theoretical perspective, this work establishes novel connections between a classical problem in bioinformatics (pairwise sequence alignment), spectral methods for parameter estimation from crowdsourced noisy data, and multi-armed bandits.
This will help facilitate the transfer of insights and algorithms between these traditionally disparate areas.
It will also add a new set of techniques to the toolbox of the computational biology community that we believe will find a host of applications in the context of genomics and other large-scale omics data analysis.
Further, this work will allow other Machine Learning researchers unfamiliar with bioinformatics to utilise their expertise in solving new problems at this novel intersection of bioinformatics, spectral methods, and multi-armed bandits.

\begin{ack}
G.M.~Kamath would like to thank Lester Mackey of Microsoft Research, New England Lab for useful discussions on Rank-one models and their connection to crowd-sourcing. The research of T.~Baharav was supported in part by the Alcatel Lucent Stanford Graduate Fellowship and NSF GRFP.
The research of I.~Shomorony was supported in part by NSF grant CCF-2007597.


\end{ack}

\clearpage

\bibliography{mybib}
\bibliographystyle{abbrvnat}

\clearpage
\appendix
\appendixpage

\section{A Rank-$2$ Model}
\label{app:rank2}
Consider the case where we the $(i,j)$-th observation
$Y_{i,j}$ is the output of a Ber$(p_i)$ random variable passed through 
a general binary channel BC$(q^{(0)}_j, q^{(1)}_j)$ (see Figure~\ref{fig:channels}(c)). 
Here $q^{(0)}_j$ is probability of a $0$ being flipped to 
a $1$ and $q^{(1)}_j$ is probability of a $1$ being flipped to 
a $0$ on the $j$th column.

We note that we have $n+2m$ parameters here, while 
a rank-one model would admit only $n+m$ parameters. Hence this
is not a rank-one model.
However we note that
\begin{align}
    \E [Y] &= \p (\1-\q^{(0)})^T + (\1 - \p) \q^{(1)T},
\end{align}
where $\p = [p_1,...,p_n]^T$, $\q^{(0)}= [q^{(0)}_1,...,q^{(0)}_m]^T$ and $\q^{(1)}= [q^{(1)}_1,...,q^{(1)}_m]^T$. 
This shows that, when the noise in the workers' responses is modelled by a general binary channel (see Figure~\ref{fig:channels}(c)), we have a rank-$2$ model.

\section{Proof of Lemma~\ref{lem:elementwise}}
\label{app:elementwise}

We start by using the triangle inequality to obtain
\begin{align}
    |\hat{u}_i - u_i| \leq | \hat{u}_i - \E[\hat{u}_i] | + | \E[\hat{u}_i] - u_i |. \label{eq:triangle}
\end{align}

To bound the first term, we first notice that
since
\al{\hat u_i = X_{i,.} \frac{\hat{\v}}{\|\hat{\v}\| \|\v\|}
= \frac{1}{\|\v\|} \sum_{j=1}^m 
\left( X_{i,j} \frac{\hat{v}_j}{\|\hat{\v}\|} \right)
} 
and $\hat{\v}$ is independent of $X_{i,.}$, we have that 
\al{
\hat{u}_i - \E[\hat{u}_i]
& = \frac{1}{\|\v\|} \sum_{j=1}^m 
\left( X_{i,j} \frac{\hat{v}_j}{\|\hat{\v}\|} \right)
- \frac{1}{\|\v\|} \sum_{j=1}^m 
\left( u_iv_j \E\left[ \frac{\hat{v}_j}{\|\hat{\v}\|} \right] \right)
\nonumber \\
& = 
\frac{1}{\|\v\|} \sum_{j=1}^m 
\left( X_{i,j} \E\left[ \frac{\hat{v}_j}{\|\hat{\v}\|} \right]
+ X_{i,j} \left( \frac{\hat{v}_j}{\|\hat{\v}\|} - \E\left[ \frac{\hat{v}_j}{\|\hat{\v}\|} \right] \right) 
-u_iv_j \E\left[ \frac{\hat{v}_j}{\|\hat{\v}\|} \right] \right) \nonumber \\
& = 
\frac{1}{\|\v\|} \sum_{j=1}^m 
\left( X_{i,j} - u_iv_j \right) \E\left[ \frac{\hat{v}_j}{\|\hat{\v}\|} \right] 
+ 
\frac{1}{\|\v\|} \sum_{j=1}^m 
X_{i,j} \left( \frac{\hat{v}_j}{\|\hat{\v}\|} - \E\left[ \frac{\hat{v}_j}{\|\hat{\v}\|} \right] \right). \label{eq:app2}
}
From the triangle inequality, we then have 
\al{
|\hat{u}_i - \E[\hat{u}_i]|
& \leq 
\frac{1}{\|\v\|} \left|  \sum_{j=1}^m 
\left( X_{i,j} - u_iv_j \right) \E\left[ \frac{\hat{v}_j}{\|\hat{\v}\|} \right] \right|
+ 
\frac{1}{\|\v\|} \sum_{j=1}^m 
|X_{i,j} | \left|  \frac{\hat{v}_j}{\|\hat{\v}\|} - \E\left[ \frac{\hat{v}_j}{\|\hat{\v}\|} \right] \right| \nonumber \\
& \leq 
\frac{1}{\|\v\|} \left|  \sum_{j=1}^m 
\left( X_{i,j} - u_iv_j \right) \E\left[ \frac{\hat{v}_j}{\|\hat{\v}\|} \right] \right|
+ 
\frac{1}{\|\v\|} 
\left\|  \frac{\hat{\v}}{\|\hat{\v}\|} - \E\left[ \frac{\hat{\v}}{\|\hat{\v}\|} \right] \right\|_1 
\label{eq:app3}
}
From the fact that $\|\v\| \geq c\sqrt{m}$ and 
the fact that $\|\x \|_1 \leq \sqrt{m} \|\x\|_2$ for any $\x \in \R^m$,
the second term can be bounded as 
\al{
\frac{1}{\|\v\|} 
\left\|  \frac{\hat{\v}}{\|\hat{\v}\|} - \E\left[ \frac{\hat{\v}}{\|\hat{\v}\|} \right] \right\|_1 
& \leq 
\frac{1}{c} 
\left\|  \frac{\hat{\v}}{\|\hat{\v}\|} - \E\left[ \frac{\hat{\v}}{\|\hat{\v}\|} \right] \right\| \nonumber \\
& \leq \frac{1}{c} 
\left\|  \frac{\hat{\v}}{\|\hat{\v}\|} -  \frac{\v}{\|\v\|}  \right\|
+
\frac{1}{c} 
\left\|  \frac{\v}{\|\v\|} - \E\left[ \frac{\hat{\v}}{\|\hat{\v}\|} \right] \right\| \nonumber \\
& = \frac{1}{c} 
\left\|  \frac{\hat{\v}}{\|\hat{\v}\|} -  \frac{\v}{\|\v\|}  \right\|
+
\frac{1}{c} 
\left\|  \E \left[\frac{\v}{\|\v\|} - \frac{\hat{\v}}{\|\hat{\v}\|} \right] \right\| \nonumber \\
& \leq \frac{1}{c} 
\left\|  \frac{\hat{\v}}{\|\hat{\v}\|} -  \frac{\v}{\|\v\|}  \right\|
+
\frac{1}{c} 
\E \left\| \frac{\v}{\|\v\|} - \frac{\hat{\v}}{\|\hat{\v}\|} \right\|,
\label{eq:triangle2}
}
where the last step follows from Jensen's inequality. 

Now we consider the second term in \eqref{eq:triangle}.
We first notice that
\aln{
\hat u_i = X_{i,.} \frac{\hat{\v}}{\| \hat{\v} \| \|\v\|}
= \frac{X_{i,.}}{\|\v\|} \left( \frac{\v}{\|\v\|}  + \frac{\hat{\v}}{\|\hat{\v}\|} - \frac{\v}{\|\v\|} \right)
= \frac{X_{i,.} \v}{\|\v\|^2} + \frac{X_{i,.}}{\|\v\|} \left(  \frac{\hat{\v}}{\|\hat{\v}\|} - \frac{\v}{\|\v\|} \right),
}
where we recognize the first term as the matched filter for estimating $u_i$ if $\v$ were known.
Since $E[X_{i,.}] = u_i \v^T$ and $\hat{\v}$ is independent of $X_{i,.}$ (due to the splitting of the data matrix $X$),
we have
\aln{
\E[\hat u_i] 
= \frac{u_i \v^T \v}{\|\v\|^2} + \frac{u_i \v^T}{\|\v\|} \E \left[  \frac{\hat{\v}}{\|\hat{\v}\|} - \frac{\v}{\|\v\|} \right]
= u_i + \frac{u_i \v^T}{\|\v\|}  \left(  \E \left[ \frac{\hat{\v}}{\|\hat{\v}\|} \right] - \frac{\v}{\|\v\|} \right).
}
Using the Cauchy-Schwarz inequality, we have that
\al{
\left| \E[\hat u_i] - u_i \right| 
\leq \frac{u_i \|\v\|}{\|\v\|} \left\| \E\left[\frac{\hat{\v}}{\|\hat{\v}\|}\right] - \frac{\v}{\|\v\|} \right\|
= u_i \left\| \E\left[\frac{\hat{\v}}{\|\hat{\v}\|}\right] - \frac{\v}{\|\v\|} \right\|
\leq u_i \E \left\| \frac{\hat{\v}}{\|\hat{\v}\|}- \frac{\v}{\|\v\|} \right\|, \label{eq:triangle3}
}
where the last step follows from Jensen's inequality.

Finally, putting together \eqref{eq:triangle}, \eqref{eq:app3}, \eqref{eq:triangle2}, and \eqref{eq:triangle3}, and noting that $u_i < 1 < 1/c$, we obtain
\aln{
|\hat{u}_i - u_i| \leq 
\frac{1}{\|\v\|} \left|  \sum_{j=1}^m 
\left( X_{i,j} - u_i \right) E\left[ \frac{\hat{v}_j}{\|\hat{\v}\|} \right] \right|
+ 
\frac{1}{c} 
\left\|  \frac{\hat{\v}}{\|\hat{\v}\|} - \frac{\v}{\|{\v}\|}  \right\|
+ 
\frac{2}{c}\,
\E \left\|  \frac{\hat{\v}}{\|\hat{\v}\|} - \frac{\v}{\|{\v}\|}  \right\|.
}

\section{Proof of Equation \eqref{eq:hoeffding}} \label{app:hoeffding}

We claim that for any $\ep > 0$,
\al{ 
\P\left( \left| \textstyle{\sum_{j=1}^m} (X_{i,j}-u_iv_j) 
\E[\hat{v}_j/\|\hat{\v}\|]
\right| > \|\v\| \ep \right)  \leq 2 \exp\left( -{ 2 c^2 m \ep^2}\right).
}
First we notice that the random variables $(X_{i,j}-u_iv_j) \E\left[ \frac{\hat{v}_j}{\|\hat{\v}\|} \right] $, for $j=1,...,m$, are independent and zero-mean.
Moreover, they satisfy 
\aln{
-u_iv_j \E\left[ \frac{\hat{v}_j}{\|\hat{\v}\|} \right]  < (X_{i,j}-u_iv_j) \E\left[ \frac{\hat{v}_j}{\|\hat{\v}\|} \right] < (1-u_iv_j)\E\left[ \frac{\hat{v}_j}{\|\hat{\v}\|} \right].
}
Using Hoeffding's inequality, for any $\ep > 0$ we have that
\aln{ 
\P\left( \left| \sum_{j=1}^m (X_{i,j}-u_iv_j) 
\E\left[ \frac{\hat{v}_j}{\|\hat{\v}\|} \right]  
\right| > \|\v\| \ep \right) & \leq 2 \exp\left( -\frac{ 2 \|\v \|^2 \ep^2}{\sum_{j=1}^m \E[\hat{v}_j/\|\hat{\v}\|]^2 }\right) \\
& \leq 2 \exp\left( -\frac{ 2 c^2 m \ep^2}{\E\left[\sum_{j=1}^m \hat{v}_j^2/\|\hat{\v}\|^2\right] }\right) \\
& = 2 \exp\left( -{ 2 c^2 m \ep^2}\right),
}
where in the second step we used Jensen's inequality.

\section{Proof of Lemma~\ref{lem:daviskahan}} \label{app:davis-kahan-long}

Let $W = X - \E X$ be the ``noise'' added to $\E X$.
In order to prove Lemma~\ref{lem:daviskahan},
our first order of business is to bound the expectation of $\|W\|_{op}$ and $\|W\|_{op}^2$.
Then we use these bounds with the Davis-Kahan theorem
of \citet{davis1970rotation} to bound the $\ell_2$ error in $\hat{\v}$. 
We have the following lemma.

\begin{lem}\label{lem:Wop2}
The noise matrix $W = X - \E X$ satisfies 
\al{
& \E \left[ \| W \|_{op} \right] \leq 2\sqrt{(m+n) \log (m + n)} \label{eq:Woplem1}\\
& \E \left[\| W \|^2_{op}\right] 
\leq 120 (m+n) \log (m + n)
\label{eq:Woplem2}
}
\end{lem}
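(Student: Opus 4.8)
\textbf{Proof proposal for Lemma~\ref{lem:Wop2}.} The plan is to control both quantities by a single application of the matrix Bernstein inequality to the noise matrix $W = X - \E X$, whose entries $W_{ij} = X_{ij} - u_iv_j$ are independent and mean-zero with $|W_{ij}| \le 1$ and $\E W_{ij}^2 = u_iv_j(1 - u_iv_j) \le \tfrac14$. First I would pass to the Hermitian dilation
\[
\tilde W \;=\; \begin{bmatrix} 0 & W \\ W^\top & 0 \end{bmatrix} \in \R^{(m+n)\times(m+n)}, \qquad \|\tilde W\|_{op} = \|W\|_{op},
\]
so it suffices to control $\|\tilde W\|_{op}$. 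Writing $\tilde W = \sum_{i=1}^{n}\sum_{j=1}^{m} W_{ij} B_{ij}$ with $B_{ij} = e_i e_{n+j}^\top + e_{n+j} e_i^\top$ (so $\|B_{ij}\|_{op} = 1$) exhibits $\tilde W$ as a sum of independent, centered, Hermitian matrices with $\|W_{ij} B_{ij}\|_{op} \le 1 =: R$. Since $B_{ij}^2 = e_i e_i^\top + e_{n+j} e_{n+j}^\top$, the matrix $\sum_{i,j} \E[W_{ij}^2 B_{ij}^2]$ is diagonal, with $i$-th entry $\sum_j \E W_{ij}^2 \le m/4$ and $(n+j)$-th entry $\sum_i \E W_{ij}^2 \le n/4$, so the matrix variance proxy satisfies $v := \bigl\| \sum_{i,j} \E[W_{ij}^2 B_{ij}^2] \bigr\|_{op} \le \tfrac14\max(m,n) \le \tfrac14(m+n)$.

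For \eqref{eq:Woplem1} I would invoke the expectation form of matrix Bernstein, $\E\|\tilde W\|_{op} \le \sqrt{2 v \log(2(m+n))} + \tfrac{R}{3}\log(2(m+n))$, plug in $v \le \tfrac14(m+n)$ and $R \le 1$, and absorb the logarithmic terms using $\log(2(m+n)) \le 2\log(m+n)$ and $\log(m+n) \le m+n$; the resulting prefactor is $\tfrac53 < 2$, giving $\E\|W\|_{op} \le 2\sqrt{(m+n)\log(m+n)}$. For \eqref{eq:Woplem2} I would use the tail form $\P(\|W\|_{op} \ge t) \le 2(m+n)\exp\bigl(-\tfrac{t^2/2}{v + Rt/3}\bigr)$, noting that the deterministic bound $\|W\|_{op} \le \|W\|_F \le \sqrt{mn} \le \tfrac12(m+n)$ confines the integration variable to the sub-Gaussian regime $Rt/3 \le v$, whence $\P(\|W\|_{op} \ge t) \le 2(m+n)\exp(-t^2/(m+n))$. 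Substituting into $\E\|W\|_{op}^2 = \int_0^\infty 2t\, \P(\|W\|_{op} \ge t)\, dt$, splitting the integral at $t^\star$ with $(t^\star)^2 = (m+n)\log(2(m+n))$ (where the tail bound saturates at $1$), bounding the lower portion by $(t^\star)^2$ and evaluating the remaining Gaussian integral, yields $\E\|W\|_{op}^2 \le (m+n)\log(2(m+n)) + (m+n)$, which is comfortably below $120(m+n)\log(m+n)$.

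The main obstacle is purely bookkeeping in the second bound: one must verify that the Frobenius bound really does keep $t$ inside the sub-Gaussian regime of the Bernstein tail (so that no separate linear-tail contribution arises) and then carry out the short tail integration; since the target constant $120$ is deliberately generous, no tightness is needed. A minor point is that in the one-sided-error model the processed matrix has entries in a shifted copy of $\{0,1\}$, but still $|W_{ij}| \le 1$ and $\E W_{ij}^2 \le \tfrac14$, so the argument is unchanged. These two estimates are exactly what is then fed, together with the singular-value lower bound $\|\u\|\|\v\| \ge c^2\sqrt{mn}$ and a standard perturbation expansion of $X^\top X$ around $\E[X]^\top \E[X] = \|\u\|^2 \v\v^\top$, into the Davis--Kahan theorem to establish Lemma~\ref{lem:daviskahan}.
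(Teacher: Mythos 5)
Your proposal is correct and follows essentially the same route as the paper: both reduce the lemma to the matrix Bernstein inequality after bounding the matrix variance statistic $\nu(W)$ by a constant multiple of $m+n$ (your dilation-based bound $\tfrac14\max(m,n)$ is in fact slightly tighter than the paper's $C^2(1-c^2)(m+n)$), and the expectation bound \eqref{eq:Woplem1} is obtained identically. The only cosmetic difference is in \eqref{eq:Woplem2}, where the paper directly cites Tropp's ready-made second-moment bound (Eq.\ (6.1.6) of that reference) while you re-derive it by integrating the Bernstein tail with a Frobenius-norm truncation; both land comfortably under the constant $120$.
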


\begin{proof}
Strictly speaking, due to Jensen's inequality, \eqref{eq:Woplem2} implies \eqref{eq:Woplem1} (with a different constant). 
However, to provide intuition and improve the exposition, we provide a standalone proof of \eqref{eq:Woplem1} first. 
We start by noticing that
\begin{align*}
    W_{i,j} &= \begin{cases}
    1-u_iv_j & \text{ with probability } u_i v_j,\\
    -u_iv_j & \text{ with probability } 1-u_i v_j,
                \end{cases}
\end{align*}
Hence we have
\begin{align*}
    (\mathbb{E}[W^TW])_{i,j} &=  \begin{cases}
    0 & \text{ if } i \ne j,\\
    \sum_{k=1}^n u_kv_i (1-u_kv_i) & \text{ if } i=j,
                \end{cases}
\end{align*}
which implies that  $c^2(1-C^2)n \le (\mathbb{E}[W^TW])_{i,i} \le C^2(1-c^2)n$.
Thus $\|\mathbb{E}[W^TW] \|_{op} \in [c^2(1-C^2)n, C^2(1-c^2)n]$. Similarly one can argue that 
$\|\mathbb{E}[WW^T] \|_{op} \in [c^2(1-C^2)m, C^2(1-c^2)m]$.
Following the notation of \citet[Theorem 6.1.1]{tropp2015introduction}, 
the matrix variance statistic $\nu(W)$ is 
\begin{align}
    \nu(W) &= \max\left(\|\mathbb{E}[W^TW] \|_{op} , \|\mathbb{E}[WW^T] \|_{op}\right),\nonumber\\
    &\in [c^2(1-C^2) \max(m,n), C^2(1-c^2)\max(m,n)],\nonumber\\
    &\le  C^2(1-c^2) (m+n) \leq m + n.\label{eq:mat_var}
\end{align}
From the Matrix-Bernstein inequality \citep[Eq. (6.1.3)]{tropp2015introduction}, we 
have that
\aln{
\E \|W\|_{op} & \leq \sqrt{2 \nu(W) \log (m + n) } + \tfrac{1}{3} \log(m+n) \\
& = \sqrt{2 (m+n) \log (m + n) } + \tfrac{1}{3} \log(m+n) \\ 
& \leq (\sqrt2 + \tfrac13) \sqrt{(m+n) \log (m + n) } \leq 2\sqrt{(m+n) \log (m + n) },
}
proving \eqref{eq:Woplem1}. 
To prove \eqref{eq:Woplem2}, we rely on another inequality by \citet[Eq. (6.1.6)]{tropp2015introduction} to state that
\al{
\left(\E \|W\|_{op}^2\right)^{1/2} & \leq \sqrt{2 e \nu(W) \log (m + n) } + 4e \log(m+n)  \nonumber \\& \leq \sqrt{2 e (m+n) \log (m + n) } + 4e \log(m+n) \nonumber \\
& \leq 4e \sqrt{ (m+n) \log (m + n) }, \nonumber 
}
which implies that
$\E \|W\|_{op}^2 \leq 120 (m+n) \log (m + n)$,
proving \eqref{eq:Woplem2}. 
\end{proof}

With Lemma~\ref{lem:Wop2}, we proceed to the proof of Lemma~\ref{lem:daviskahan}.
Notice that the leading right singular vector of $X$
is equivalent to the leading eigenvector of $X^TX$.
Also note that
\begin{align}
    X^TX &= (\mathbb{E}X + W)^T(\mathbb{E}X + W) \nonumber\\
    &=  (\u\v^T + W)^T(\u\v^T + W) \nonumber\\
    &= \|\u\|^2 \v \v^T + \v\u^T W + W^T \u\v^T + W^TW.
    \label{eq:app1}
\end{align}
We will use the Davis-Kahan theorem to bound 
$\left\| \frac{\hat{\v}}{\|\hat{\v}\|}- \frac{\v}{\|\v\|} \right\|$. We begin by bounding the operator
norm of the ``error terms'' in \eqref{eq:app1} as
\begin{align}
    \|\v\u^T W + W^T \u\v^T + W^TW\|_{op} &\overset{(a)}{\le }\|\v\u^T W\|_{op} + \|W \u\v^T\|_{op} + \|W^TW\|_{op}, \nonumber \\
    &\overset{(b)}{\le }\|\v\u^T\|_{op} \|W\|_{op} + \|W\|_{op} \|\u\v^T\|_{op} + \|W^TW\|_{op}, \nonumber\\
    &= 2 \|\u\|\|\v\| \|W\|_{op} + \|W^TW\|_{op}, \label{eq:err_norm1}
\end{align}
where $(a)$ follows from the triangle inequality, and $(b)$ 
from the fact that $\|AB\|_{op} \le \|A\|_{op} \|B\|_{op}$.
We also note that
\begin{align}
    c \sqrt{n} &\le \|\u\| \le C \sqrt{n} \le \sqrt{n},\\
    c \sqrt{m} &\le \|\v\| \le C \sqrt{m} \le \sqrt{m}.
\end{align}
Since $\|\u \|^2 \v \v^T$ is rank-one with leading
eigenvalue $\|\u\|^2\|\v\|^2$, 
the spectral gap $\delta$
of $\|\u \|^2 \v \v^T$ is $\delta = \|\u\|^2\|\v\|^2 \ge c^4(mn)$. 
From the version of the Davis-Kahan
Theorem \citep{davis1970rotation} of \citet[Theorem 30]{mahoney2016lecture}, we have that
\begin{align}
    \left\| \frac{\hat{\v}}{\|\hat{\v}\|}- \frac{\v}{\|\v\|} \right\| &\le \sqrt{2} \frac{\|\v\u^T W + W^T \u\v^T + W^TW\|_{op}}{\delta} \nonumber \\
    & \leq \frac{\sqrt{8mn}\| W\|_{op} + \sqrt{2}\| W^TW\|_{op}}{c^4 (mn)} = \frac{\sqrt{8} \| W\|_{op}}{c^4\sqrt{mn}} + \frac{\sqrt{2} \| W\|^2_{op}}{c^4 mn}.
    \label{eq:davis-kahan}
\end{align}
Taking the expectation on both sides and using Lemma~\ref{lem:Wop2}, we obtain
\al{
    \E \left\| \frac{\hat{\v}}{\|\hat{\v}\|}- \frac{\v}{\|\v\|} \right\| &
    \leq \frac{\sqrt{16(m+n)\log (m+n)}}{c^4\sqrt{mn}} + 
    \frac{120 \sqrt{2} (m+n) \log (m + n)}{c^4 mn} \nonumber
    \\
    & = \frac{4}{c^4} \sqrt{\left(\frac{1}{m}+\frac{1}{n}\right)\log(m+n)}
    + \frac{120\sqrt{2}}{c^4} \left(\frac{1}{m}+\frac{1}{n}\right)\log(m+n) 
    \nonumber \\
    & 
    \leq \frac{4}{c^4} \sqrt{\frac{\log(m+n)}{\min(m,n)}}
    + \frac{120\sqrt{2}}{c^4} \frac{\log(m+n)}{\min(m,n)}
    \label{eq:appbias}
}
Notice that $\hat{\v}/\|\hat{\v}\|$ and ${\v}/\|{\v}\|$ are unit vectors, and so their $\ell_2$ distance can be at most $2$.
The right-hand side of \eqref{eq:appbias} can only be less than $2$ if 
\aln{
\frac{4}{c^4} \sqrt{\frac{\log(m+n)}{\min(m,n)}} \leq 1
\; \Rightarrow \;
\frac{\log(m+n)}{\min(m,n)} \leq \frac{c^4}{4}\sqrt{\frac{\log(m+n)}{\min(m,n)}}.
}
Hence, 
\al{
    \E \left\| \frac{\hat{\v}}{\|\hat{\v}\|}- \frac{\v}{\|\v\|} \right\| 
    &
    \leq 
    \min\left[2,\frac{4}{c^4} \sqrt{\frac{\log(m+n)}{\min(m,n)}}
    + \frac{120\sqrt{2}}{c^4} \frac{\log(m+n)}{\min(m,n)}\right] \nonumber \\
    &
    \leq \left(\frac{4}{c^4} + 30\sqrt{2}\right) \sqrt{\frac{\log(m+n)}{\min(m,n)}}.
}
This proves Statement (b) in Lemma~\ref{lem:daviskahan}, where we can take $C_3 = 4/c^4 + 30\sqrt{2}$.
To prove Statement (a), we note that
from \eqref{eq:davis-kahan},
\begin{align}
    \left\| \frac{\hat{\v}}{\|\hat{\v}\|}- \frac{\v}{\|\v\|} \right\| 
    & \leq \frac{\sqrt{8} \| W\|_{op}}{\|\u\| \|\v\|} + \frac{\sqrt{2} \| W\|^2_{op}}{\|\u\|^2 \|\v\|^2} 
\end{align}
Next we notice that, if $\frac{\| W\|_{op}}{\|\u\| \|\v\|} <  \ep/4$, then
\aln{
\left\| \frac{\hat{\v}}{\|\hat{\v}\|}- \frac{\v}{\|\v\|} \right\| 
    & < \frac{\ep}{\sqrt2} + \frac{\ep^2}{8\sqrt{2}} < \ep,
}
for $0 < \ep < 1$.
Therefore, we have that
\al{
\P\left( \left\| \frac{\hat{\v}}{\|\hat{\v}\|} - \frac{\v}{\|\v\|} \right\|  \ge \epsilon \right) 
& \le  
\P\left( \| W\|_{op} >  \tfrac14 \ep \|\u\| \|\v\| \right)
\nonumber \\
& \le  
\P\left( \| W\|_{op} >  \tfrac14 \ep c^2 \sqrt{mn} \right)
\nonumber \\
& \leq 
(m+n) \exp \left( -\frac{1}{32}\frac{c^4\epsilon^2 mn}{m+n + \tfrac1{12} c^2 \ep \sqrt{mn} } \right) \label{eq:step1} \\
& \leq 
(m+n) \exp \left( -\frac{1}{32}\frac{c^4\epsilon^2 mn}{(2+\tfrac1{12}) \max(m,n)} \right) \label{eq:step2}  \\
& \leq 
(m+n) \exp \left( -\frac{c^4\epsilon^2}{48}\min(m,n) \right) \nonumber
}
where \eqref{eq:step1} follows by the Matrix-Bernstein 
inequality \citep[Eq. (6.1.4)]{tropp2015introduction} using the computation of the matrix variance 
statistic from \eqref{eq:mat_var}, and \eqref{eq:step2} follows
since $c < 1$ and $\ep < 1$.
This means we can take $C_2 = c^4/48$.

\section{Proof of Theorem \ref{thm:bandit_topk}} \label{app:proof}

The proof at a high level proceeds by showing that the probability we eliminate any of the top $k$ arms is low in our halving stages, and then that uniformly sampling the $\sqrt{T}$ remaining arms allows us to identify the top $k$ correctly. Note that coordinates, items, and arms will be used interchangeably.
For the sake of notational simplicity, we assume that the arms are sorted by mean,
in that $\mu_1 \ge \hdots \ge \mu_n$, and that $\mu_k > \mu_{k+1}$, i.e. the top-$k$ are well defined.
We begin by observing that we do not exceed our allotted budget.

\begin{lem}
Algorithm \ref{alg:bandit_topk} does not exceed 
the budget $T$.
\end{lem}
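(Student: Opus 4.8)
The plan is to verify directly that the total number of min-hash comparisons requested across all rounds, including the clean-up stage, is at most $T$. The budget consumed in round $r$ is $|\CI_r| \cdot t_r$, where $t_r = \lfloor T / (2|\CI_r| \lceil \log_2(n/\sqrt{T})\rceil) \rfloor$. Since $\lfloor x \rfloor \le x$, we have $|\CI_r| t_r \le T / (2 \lceil \log_2(n/\sqrt T)\rceil)$ for each $r$. The halving loop runs for $r = 0, 1, \ldots, r_{\max}$ with $r_{\max} = \lceil \log_2(n/\sqrt T)\rceil - 1$, i.e.\ for exactly $\lceil \log_2(n/\sqrt T)\rceil$ iterations. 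Summing, the total budget used by the halving stages is at most $\lceil \log_2(n/\sqrt T)\rceil \cdot T/(2\lceil \log_2(n/\sqrt T)\rceil) = T/2$, which matches the claim made in the main text that $\sum_{r=0}^{r_{\max}} |\CI_r| t_r \le T/2$.

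It then remains to check that the clean-up stage uses at most $T/2$ additional comparisons. After $r_{\max}+1$ iterations of halving, the candidate set satisfies $|\CI_{r_{\max}+1}| \le 2\sqrt{T}$ (each round at least halves the set, starting from $n$, and the number of rounds is chosen so that the surviving count is $\Theta(\sqrt T)$; one should spell out the $\lceil \cdot \rceil$ bookkeeping here, using $\lceil |\CI_r|/2\rceil$ and the standard fact that iterating this map $\lceil \log_2(n/\sqrt T)\rceil$ times from $n$ lands in $[\sqrt T/2, 2\sqrt T]$). The clean-up uses $t_{r_{\max}+1} = T/2$ new hash functions applied to the $|\CI_{r_{\max}+1}|$ surviving arms, but ``budget'' here counts min-hash \emph{comparisons}, i.e.\ entries of the observation matrix; one must confirm the intended accounting is that the clean-up costs exactly $T/2$ (as asserted in the text: ``use the remaining $T/2$ budget''), perhaps by noting $t_{r_{\max}+1}$ is defined precisely as $T/2$ and the comparison count is $|\CI_{r_{\max}+1}| \cdot t_{r_{\max}+1}$ normalized per arm, or by taking $t_{r_{\max}+1}$ to already be the total. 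Adding $T/2$ (halving) and $T/2$ (clean-up) gives the bound $T$.

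The main obstacle is not any deep inequality but rather pinning down the exact cost model and being careful with the floor and ceiling functions: one needs $\lceil \log_2(n/\sqrt T)\rceil \ge 1$ (guaranteed since $T < n^2$ implies $n/\sqrt T > 1$) so the per-round bound $T/(2\lceil\log_2(n/\sqrt T)\rceil)$ is well-defined and the sum telescopes cleanly, and one needs the regime assumption $n\log n < T < n^2$ to ensure $t_r \ge 1$ in every round (so the algorithm is actually well-defined). I would state these as the hypotheses already granted by the theorem and then present the two-line summation above as the proof.
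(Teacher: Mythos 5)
Your proposal is correct and follows essentially the same route as the paper: bound each round's cost by $T/(2\lceil \log_2 (n/\sqrt{T})\rceil)$ via the floor, sum over the $\lceil \log_2 (n/\sqrt{T})\rceil$ halving rounds to get $T/2$, and add the $T/2$ allotted to the clean-up stage. The paper's own proof is exactly this two-line computation (and is equally terse about the clean-up accounting), so no further comparison is needed.
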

\begin{proof}
At the $r$-th stage we have $|\CI_r|$ questions and $t_r$ workers.
Hence,
\aln{
\sum_{r=0}^{r_{\rm max}} |\CI_r| t_r 
& \leq 
\sum_{r=0}^{r_{\rm max}} \frac{T}{2 \left\lceil\log_2 \frac{n}{\sqrt T} \right\rceil}
= \frac{T (r_{\rm max} + 1)}{2 \left\lceil\log_2 \frac{n}{\sqrt T}\right\rceil}
= \frac{T \left\lceil\log_2 \frac{n}{\sqrt T}\right\rceil}{2 \left\lceil\log_2 \frac{n}{\sqrt T}\right\rceil}
= \frac{T}{2}
}
Since the clean up
stage uses at most $\frac{T}{2}$ pulls, the algorithm does not exceed its budget of $T$ pulls.
\end{proof}

We now examine one round of our adaptive spectral algorithm and bound the probability that the algorithm eliminates one of the top-$k$ arms in round $r$, recalling that

$$ t_r = \left\lfloor \frac{T}{2|\CI_r| \lceil 
\log_2 \frac{n}{\sqrt{T}} \rceil} \right\rfloor.$$

In standard bandit analyses, we obtain concentration of our estimated arm means via Hoeffding's inequality,
which we are unable to utilize here.
Theorem \ref{thm:final_confidence} states that 
\aln{
\P(|\hat{u}_i - u_i| \ge \epsilon)\le 3n\exp\left( - C_1 \epsilon^2 m\right),
}
providing a Hoeffding-like bound that allows us to eliminate suboptimal arms with good probability. 

\begin{lem}
The probability that one of the top $k$ arms is eliminated in round $r$ is at most 
$$18kn\exp \left(- C_5 \frac{\Delta_{i_r}^2}{i_r}  \frac{T  }{\log \frac{n}{\sqrt{T}}}\right)$$
for $i_r = |\CI_r|/4 = \frac{n}{2^{r+2}}$, and $C_5=\frac{C_1}{64}$.
\end{lem}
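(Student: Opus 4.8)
The plan is to adapt the Sequential Halving correctness argument of \citet{Karnin2013} to our setting, substituting the element-wise spectral bound of Theorem~\ref{thm:final_confidence} for the Hoeffding concentration of empirical means used there. First I would record the round-$r$ parameters. Since $|\CI_r| = \lceil n/2^r\rceil$ and $i_r = |\CI_r|/4 = n/2^{r+2}$, and since $|\CI_r| \ge \sqrt T$ throughout the halving phase, the budget $T \ge n\log n$ makes $\frac{T}{2|\CI_r|\lceil\log_2(n/\sqrt T)\rceil} \ge 1$, so dropping the floor costs at most a factor $2$ and $t_r \ge \frac{T}{4|\CI_r|\lceil\log_2(n/\sqrt T)\rceil} = \frac{T}{16\, i_r\lceil\log_2(n/\sqrt T)\rceil}$. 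Also $|\CI_r| \ge \sqrt T$ forces $t_r \le |\CI_r|/2 < |\CI_r|$, so the hypothesis ``$m \le n$'' of Theorem~\ref{thm:final_confidence} holds when it is invoked on the $|\CI_r|\times t_r$ matrix $X^{(r)}$; concretely, for any fixed coordinate and any $\eps>0$, $\P(|\hat u^{(r)}_i - u_i| > \eps) \le 3|\CI_r|\exp(-C_1 t_r \eps^2)$.

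Next comes the deterministic core. Fix a top-$k$ arm $a$ (so $u_a \ge u_{(k)}$). If $a$ is eliminated in round $r$, then at least $\lceil|\CI_r|/2\rceil$ coordinates of $\CI_r$ have $\hat u^{(r)} \ge \hat u^{(r)}_a$; of these, at most $k-1$ are themselves top-$k$ arms and at most $i_r-1$ are suboptimal arms with true value exceeding $u_{(k+i_r)}$ (there are only $i_r-1$ such arms globally), so at least $\lceil|\CI_r|/2\rceil - k - i_r + 2 = \Omega(i_r)$ of them form a family of suboptimal arms $b$ with $u_b \le u_{(k+i_r)} \le u_{(i_r)}$, hence with $u_a - u_b \ge u_{(k)} - u_{(i_r)} = |\Delta_{i_r}|$. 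Consequently, if $a$ is eliminated in round $r$ then either $\hat u^{(r)}_a < u_a - |\Delta_{i_r}|/2$, or the set $\{b \text{ suboptimal in }\CI_r : u_b \le u_{(i_r)},\ \hat u^{(r)}_b \ge u_{(k)} - |\Delta_{i_r}|/2\}$ has size $\Omega(i_r)$; indeed, if the first fails then every $b$ in the guaranteed family of $\Omega(i_r)$ competitors satisfies $\hat u^{(r)}_b \ge \hat u^{(r)}_a \ge u_a - |\Delta_{i_r}|/2 \ge u_{(k)} - |\Delta_{i_r}|/2$, forcing that set to be large.

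The probability of the first event is at most $3|\CI_r|\exp(-C_1 t_r |\Delta_{i_r}|^2/4) \le 3|\CI_r|\exp\!\big(-\tfrac{C_1}{64}\tfrac{\Delta_{i_r}^2}{i_r}\tfrac{T}{\log(n/\sqrt T)}\big)$ by Theorem~\ref{thm:final_confidence} with $\eps = |\Delta_{i_r}|/2$. For the second event the key is to use Markov's inequality rather than a union bound over pairs: each suboptimal $b$ with $u_b \le u_{(i_r)}$ obeys $\P(\hat u^{(r)}_b \ge u_{(k)} - |\Delta_{i_r}|/2) \le \P(\hat u^{(r)}_b - u_b \ge |\Delta_{i_r}|/2) \le 3|\CI_r|\exp(-C_1 t_r |\Delta_{i_r}|^2/4)$, so the expected size of that set is at most $|\CI_r|\cdot 3|\CI_r|\exp(-C_1 t_r|\Delta_{i_r}|^2/4)$, and dividing by the threshold $\Omega(i_r) = \Omega(|\CI_r|)$ cancels one factor of $|\CI_r|$ and leaves $O(|\CI_r|) = O(n)$ times the same exponential. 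Adding the two bounds, using $|\CI_r|\le n$, and taking a union bound over the $k$ top-$k$ choices of $a$ yields the claimed $18kn\exp\!\big(-C_5 \tfrac{\Delta_{i_r}^2}{i_r}\tfrac{T}{\log(n/\sqrt T)}\big)$ with $C_5 = C_1/64$ (the exact numerical prefactor is unimportant and absorbs the constants from the two-sided split and Markov step).

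The main obstacle is exactly this prefactor bookkeeping. Theorem~\ref{thm:final_confidence} carries an extra multiplicative $3|\CI_r| \le 3n$ relative to a Hoeffding tail, so a naive union bound over (top-$k$ arm, competing suboptimal arm) pairs would produce an $n^2$ prefactor, which would be too lossy to feed into the budget computation in Theorem~\ref{thm:bandit_topk}; replacing that union bound by Markov's inequality applied to the \emph{count} of suboptimal arms whose estimate crosses the threshold is what restores an $O(n)$ prefactor. A secondary technicality is verifying that $\lceil|\CI_r|/2\rceil - k - i_r + 2 = \Omega(i_r)$ in every halving round, which requires $|\CI_r|$ to dominate $k$; this is where the standing assumptions $2k<\sqrt T$ and $|\CI_r|\ge\sqrt T$ enter, with a little extra care needed in the last round before the clean-up stage.
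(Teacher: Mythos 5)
Your proposal is correct and follows essentially the same route as the paper's proof: identify $\Omega(i_r)$ competitors in $\CI_r$ with true value at most $u_{(i_r)}$, bound the expected number of them whose estimate crosses the midpoint using the elementwise tail of Theorem~\ref{thm:final_confidence}, and apply Markov's inequality to the count (rather than a union bound over pairs) to keep the prefactor at $O(kn)$, with the same $t_r \gtrsim T/(16\, i_r \lceil\log_2(n/\sqrt T)\rceil)$ accounting yielding $C_5 = C_1/64$. The only cosmetic difference is that you fix a single top-$k$ arm and union over the $k$ choices at the end, whereas the paper folds the factor of $k$ into the expectation of its counting variable $N_r$; the caveat you flag about needing $|\CI_r|$ to dominate $k$ is likewise present (and equally briefly dispatched) in the paper.
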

\begin{proof}
The proof follows similarly to that of \cite{Karnin2013}.
To begin, define $\CI_r'$ as the set of coordinates in $\CI_r$ excluding the $i_r = \frac{1}{4} |\CI_r|$ coordinates $i$ with largest $u_i$. 
Let $\hat{u}_i^{(r)}$ be the estimator of $u_i$ in round $r$.
We define the random variable $N_r$ as the number of arms in $\CI_r'$ whose $\hat{u}_i^{(r)}$ in round $r$ is larger than that of any of the top-$k$ $u_i$. 
We begin by showing that $\E [N_r]$ is small.
We bound $\E [N_r]$ as
\begin{align*}
    \E [N_r] &= \sum_{i \in \CI_r'} \P \left(\bigcup_{\ell \in [k]} \left\{\hat{u}_i^{(r)} \ge \hat{u}_\ell^{(r)} \right\}\right) 
    \le k\sum_{i \in \CI_r'} \P \left(\hat{u}_i^{(r)} \ge \hat{u}_k^{(r)}\right)\\
    &\le k\sum_{i \in \CI_r'} \P \left(\hat{u}_i^{(r)} \ge u_i+\Delta_i/2 \right) + \P\left(\hat{u}_k^{(r)}< u_k - \Delta_i/2\right)\\
    &\le k|\CI_r'| \left( \P \left(\hat{u}_{i_r}^{(r)} \ge u_{i_r}+\Delta_{i_r}/2 \right) + \P\left(\hat{u}_k^{(r)}< u_k - \Delta_{i_r}/2\right) \right)\\
    &\le 6k|\CI_r'| |\CI_r| \exp \left( - \frac{C_1}{4} \Delta_{i_r}^2 t_r\right)
    \\
    &\le 6k|\CI_r'| n\exp \left(- \frac{C_1}{64} \frac{\Delta_{i_r}^2}{i_r}  \frac{T  }{\log \frac{n}{\sqrt{T}}}\right),
\end{align*}

We note that since the $i_r$ largest entries of $\CI$ are not present in $\CI_r'$, we have that $\max_{i \in \CI_r'} u_i \le u_{i_r}$.
We now see that in order for one of the top $k$ arms to be eliminated in round $r$, at least $|\CI_r|/2$ arms must have had higher empirical scores in round $r$ than it. This means that at least $|\CI_r|/4$ arms from $\CI_r'$ must outperform the top $k$ arms, i.e., $N_r \ge |\CI_r|/4 = |\CI_r'|/3$.
Note that this analysis only holds when $|\CI_r| \ge 4k$.
We can then bound this probability with Markov's inequality as
\begin{align*}
    \P \left(\begin{array}{c|c}\text{At least one of top-$k$ arms}& \text{None of the top $k$ arms}\\
    \text{ eliminated in round $r$}& \text{eliminated till round $r$}
    \end{array}\right)
    & \le \P \left(N_r \ge \frac{1}{3}|\CI_r'|\right) \le 3 \E[N_r]/|\CI_r'| 
    \\
    &
    \le 18k n\exp \left(- \frac{C_1}{64} \frac{\Delta_{i_r}^2}{i_r}  \frac{T  }{\log \frac{n}{\sqrt{T}}}\right),
\end{align*}
concluding the proof of the lemma.
\end{proof}

\begin{lem}
The total probability of failure during the elimination 
stages, $P_{e}$, is bounded as
\begin{align*}
    P_{e} &\le 18kn \log\frac{n}{\sqrt{T}} \exp \left(- C_5 \frac{\Delta_{i_r}^2}{i_r}  \frac{T  }{\log \frac{n}{\sqrt{T}}}\right)
\end{align*}
\end{lem}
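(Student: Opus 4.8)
The claim follows by a union bound over the elimination rounds together with the per-round estimate established in the previous lemma. First I would partition the ``elimination failure'' event by the first bad round: let $F_r$ be the event that all of the top-$k$ arms survive rounds $0,1,\dots,r-1$ but at least one of them is eliminated in round $r$. The events $F_0,\dots,F_{r_{\rm max}}$ are disjoint and their union is exactly the event that the elimination phase drops a top-$k$ arm, so
\begin{align*}
    P_e = \sum_{r=0}^{r_{\rm max}} \Pr(F_r) .
\end{align*}
Each $\Pr(F_r)$ is precisely the conditional probability bounded in the preceding lemma, namely $\Pr(F_r) \le 18kn \exp\left(-C_5\,\tfrac{\Delta_{i_r}^2}{i_r}\,\tfrac{T}{\log(n/\sqrt{T})}\right)$ with $i_r = |\CI_r|/4$.

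Second, I would count the terms and collapse the sum. The loop runs over $r=0,\dots,r_{\rm max}$ with $r_{\rm max}+1 = \lceil\log_2(n/\sqrt{T})\rceil$, which contributes the $\log(n/\sqrt{T})$ prefactor in the statement (up to the ceiling and a base-change constant, which I would absorb). To bound every summand by a single exponential, I would lower-bound its exponent uniformly in $r$: since $|\CI_{r+1}| = \lceil |\CI_r|/2\rceil$, the indices $i_r$ form a roughly geometric decreasing sequence with $i_r > k$ throughout the elimination phase, so by the definition $H_2 = \max_{i>k} i/\Delta_i^2$ we have $\Delta_{i_r}^2/i_r \ge 1/H_2$ for every such $r$. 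Substituting this into each summand and pulling the number of rounds out front gives
\begin{align*}
    P_e \le 18kn\,\log\tfrac{n}{\sqrt{T}}\,\exp\left(-C_5\,\frac{T}{H_2\log(n/\sqrt{T})}\right) ,
\end{align*}
which is the asserted bound, with $\Delta_{i_r}^2/i_r$ in the statement read as its least value $1/H_2$ over the elimination rounds.

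The only point requiring care --- rather than a genuine obstacle --- is the range of validity of the per-round lemma, which requires $|\CI_r|$ to stay above a fixed multiple of $k$ so that $i_r$ indexes a truly suboptimal arm. Under the theorem's hypothesis $2k<\sqrt{T}$ this holds for all $r\le r_{\rm max}$, since the elimination phase never shrinks $\CI_r$ below $\Theta(\sqrt{T})$ coordinates; the floor/ceiling bookkeeping around $i_r = |\CI_r|/4 \approx n/2^{r+2}$ and around $t_r$ is handled exactly as in \citet{Karnin2013}. Once these are dispatched, the lemma is immediate from the union bound.
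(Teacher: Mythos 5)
Your proposal is correct and follows essentially the same route as the paper: decompose the elimination failure by the first bad round, bound each term by the per-round conditional probability from the preceding lemma, uniformize the exponent via $\Delta_{i_r}^2/i_r \ge 1/H_2$, and multiply by the $\lceil \log_2 (n/\sqrt{T})\rceil$ rounds. Your reading of the dangling $r$ in the lemma statement as its worst case $1/H_2$ is exactly what the paper's own proof does in its final step.
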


\begin{proof}
We see by a union bound over the stages that the probability that the algorithm fails (it eliminates one of the top $k$ arms) in any of the $\log \frac{n}{\sqrt{T}}$ halving stages is at most
\begin{align*}
    P_{e1} &= 
    \sum_{r=0}^{\log\frac{n}{\sqrt{T}}-1}\P\left(\begin{array}{c|c}\text{At least one of top-$k$ arms}& \text{None of the top $k$ arms}\\
    \text{ eliminated in round $r$}& \text{eliminated till round $r$}
    \end{array}\right) \P \left(\begin{array}{c} \text{None of the top $k$ arms}\\ \text{eliminated till round $r$}
    \end{array}\right) \\
    &\le \sum_{r=0}^{\log\frac{n}{\sqrt{T}}-1}\P\left(\begin{array}{c|c}\text{At least one of top-$k$ arms}& \text{None of the top $k$ arms}\\
    \text{ eliminated in round $r$}& \text{eliminated till round $r$}
    \end{array}\right)\\
     &\le\sum_{r=0}^{\log\frac{n}{\sqrt{T}}-1}  18kn\exp \left(- C_5 \frac{\Delta_{i_r}^2}{i_r}  \frac{T  }{\log \frac{n}{\sqrt{T}}}\right)\\
     &\le \sum_{r=0}^{\log\frac{n}{\sqrt{T}}-1}  18kn\exp \left(- C_5   \frac{T  }{ \log \frac{n}{\sqrt{T}} \cdot \max_s \frac{i_s}{\Delta_{i_s}^2} }\right) \\
     &\le  18kn \log\frac{n}{\sqrt{T}} \exp \left(- C_5   \frac{T  }{ \log \frac{n}{\sqrt{T}} \cdot \max_{i \ge \sqrt{T}} \frac{i}{\Delta_{i}^2} }\right)\\
     &\le  18kn \log\frac{n}{\sqrt{T}} \exp \left(- C_5 \frac{T  }{H_2\log \frac{n}{\sqrt{T}}}\right),
\end{align*}
as $H_2 \triangleq \max_i \frac{i}{\Delta_{i}^2}$ .
This concludes the proof of the lemma.
\end{proof}

\begin{lem}
The total probability of failure during the clean up  
stage, $P_f$, is upper bounded as
\begin{align*}
    P_f &\le 12T \exp \left( -\frac{C_1}{16}\Delta_{k+1}^2 \sqrt{T} \right)
\end{align*}
\end{lem}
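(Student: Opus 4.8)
The plan is to reduce the lemma to a single application of Theorem~\ref{thm:final_confidence} to the clean-up observation matrix, followed by a union bound over the surviving arms. First I would work conditionally on the favourable event (controlled in the previous lemmas) that no top-$k$ arm was eliminated during the $r_{\rm max}+1$ halving rounds, so that $\CI_{r_{\rm max}+1}$ contains every true top-$k$ coordinate; $P_f$ then bounds the probability that the final selection step returns a wrong set. Since $\CI_{r_{\rm max}+1}$ contains all global top-$k$ arms, the $k$-th largest true value among the survivors is exactly $u_{(k)}$ and the $(k+1)$-st largest is at most $u_{(k+1)}$, so the separation that must be resolved is still at least $\Delta_{k+1}$ (I write $\Delta_{k+1}$ for the positive gap $u_{(k)}-u_{(k+1)}$, which is all that enters, through its square). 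Hence it suffices that $|\hat u^{(r_{\rm max}+1)}_i - u_i| < \Delta_{k+1}/2$ for every $i \in \CI_{r_{\rm max}+1}$: the usual sandwiching argument then forces $\hat u^{(r_{\rm max}+1)}_i > \tfrac12(u_{(k)}+u_{(k+1)}) > \hat u^{(r_{\rm max}+1)}_j$ whenever $i$ is a top-$k$ index and $j$ a non-top-$k$ index in $\CI_{r_{\rm max}+1}$, so the algorithm outputs exactly the top-$k$.

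Next I would pin down the clean-up matrix dimensions. The earlier budget lemma shows the halving rounds use at most $T/2$, leaving $T/2$ for the clean-up, split uniformly among the $|\CI_{r_{\rm max}+1}|$ survivors; together with $\sqrt T/2 \le |\CI_{r_{\rm max}+1}| \le 2\sqrt T$ this yields a $|\CI_{r_{\rm max}+1}| \times t_{r_{\rm max}+1}$ matrix with $t_{r_{\rm max}+1} \ge (T/2)/(2\sqrt T) = \sqrt T/4$. Applying Theorem~\ref{thm:final_confidence} with $\epsilon = \Delta_{k+1}/2$, $m = t_{r_{\rm max}+1}$, $n = |\CI_{r_{\rm max}+1}|$ (invoking the pre-simplified bound Eq.~\eqref{eq:thm1proof} rather than the $m\le n$ form, in case $t_{r_{\rm max}+1}$ slightly exceeds $|\CI_{r_{\rm max}+1}|$, which only affects constants) gives, for each surviving $i$,
\[
\P\left(|\hat u^{(r_{\rm max}+1)}_i - u_i| > \tfrac{\Delta_{k+1}}{2}\right) \le 3\,|\CI_{r_{\rm max}+1}|\,\exp\left(-\tfrac{C_1}{4}\Delta_{k+1}^2\, t_{r_{\rm max}+1}\right).
\]

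Finally I would union-bound over the at most $2\sqrt T$ survivors and substitute $t_{r_{\rm max}+1}\ge \sqrt T/4$ and $|\CI_{r_{\rm max}+1}| \le 2\sqrt T$:
\[
P_f \le |\CI_{r_{\rm max}+1}|\cdot 3\,|\CI_{r_{\rm max}+1}|\,\exp\left(-\tfrac{C_1}{4}\Delta_{k+1}^2\, t_{r_{\rm max}+1}\right) \le 12T\,\exp\left(-\tfrac{C_1}{16}\Delta_{k+1}^2 \sqrt T\right),
\]
which is the claimed bound (and is then loosened to $12n^2\exp(\cdots)$ via $T\le n^2$ when assembling Theorem~\ref{thm:bandit_topk}). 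I expect the only real obstacle to be bookkeeping: confirming that the clean-up matrix lands in the regime where Theorem~\ref{thm:final_confidence} produces the $\sqrt T$ in the exponent (i.e.\ $\min(m,n)\gtrsim\sqrt T$), and keeping the floors in $t_{r_{\rm max}+1}$ and the ceilings in $|\CI_{r_{\rm max}+1}|$ tight enough that the constants $12$ and $C_1/16$ come out exactly; combining this lemma with the bounds on $P_e$ and on the elimination-stage failure then gives Theorem~\ref{thm:bandit_topk}.
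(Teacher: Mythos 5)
Your proposal matches the paper's proof essentially step for step: condition on no top-$k$ arm having been eliminated, note that at most $2\sqrt{T}$ arms survive with at least $\sqrt{T}/4$ clean-up columns each, union-bound the events $\{|\hat{u}_i - u_i| > \Delta_{k+1}/2\}$ over survivors, and apply Theorem~\ref{thm:final_confidence} to get $2\sqrt{T}\cdot 3\cdot 2\sqrt{T}\exp(-\tfrac{C_1}{4}\Delta_{k+1}^2\cdot\tfrac{\sqrt{T}}{4}) = 12T\exp(-\tfrac{C_1}{16}\Delta_{k+1}^2\sqrt{T})$. The only difference is that you spell out the sandwiching argument and the $m\le n$ bookkeeping caveat more explicitly than the paper does, which is fine.
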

\begin{proof}

Through our halving stages, we are left with at most $2\sqrt{T}$ active coordinates. We now use a budget of $T/2$, i.e. $m\ge \sqrt{T}/4$ columns to estimate their means. Then, the probability that the top $k$ are not the true top $k$ entries (given that none of the top $k$ were eliminated previously) is:
\begin{align*}
    P_f
    &\le \sum_{i \in \CI_{r_{max}+1}} \P \left( |\hat{u}_i - u_i)| > \Delta_{k+1}/2\right)\\
    &\le 2\sqrt{T} \cdot \P \left( |\hat{u}_1 - u_1)| > \Delta_{k+1}/2\right)\\
    &\le 12T \exp \left( -\frac{C_1}{16}\Delta_{k+1}^2 \sqrt{T}\right)
\end{align*}
\end{proof}
Thus, our overall error probability is at most
\begin{align*}
    \P(\text{failure}) &\le P_e + P_f\\
    &\le 18kn \log\frac{n}{\sqrt{T}} \exp \left(- C_5 \frac{T  }{H_2\log \frac{n}{\sqrt{T}}}\right) + 12T \exp \left( -\frac{C_1}{16}\Delta_{k+1}^2 \sqrt{T}\right) \\
    &\le 18kn \log n \exp \left(- C_5 \frac{T  }{H_2\log n}\right) + 12n^2 \exp \left( -\frac{C_1}{16}\Delta_{k+1}^2 \sqrt{T}\right)
\end{align*}
with budget no more than T.

Inverting this, we have that for a probability of error $\delta$,
one needs $T=O\left( H_2\log n \log \left( \frac{kn \log n }{\delta} \right)  + 
\Delta_{k+1}^{-4} \log^2\left(\frac{n^2}{\delta} \right)
\right)$, giving us the result claimed.

\section{Proof of Theorem \ref{thm:thresh_bdt_main}}\label{app:threhsold}

In this appendix we provide the proof of Theorem \ref{thm:thresh_bdt_main} regarding the performance of our thresholding bandit algorithm.
\begin{proof}
We note that while in the elimination stage, we always 
have $t_r \le |S_r|$ by construction, as $t_0 \le |S_0|$
and $t_r$ is an increasing sequence while $|S_r|$
is a decreasing one, and we break when 
$|S_r| < \frac{12}{(\beta-\alpha)^2 C_{4}} \log n$, 
while the maximum $t_r$ achieved in the \textbf{for} 
loop of line $r$ is $\frac{12}{(\beta-\alpha)^2 C_{4}} \log n$
because of the limits of the for loop.
Further $t_r$ is 
picked so that in round $r$, with probability $1-\frac{1}{n^2}$,
$|\hat{u}^{(r)}_i- u_i| \le 2^{-r} $ 
for all $i \in \{1,..., n\}$. 
Notice that there are at most
$\lceil \log_2 \frac{1}{\beta - \alpha} \rceil$ iterations and, since
$\beta - \alpha$ is assumed to be a constant, 
$\lceil \log_2 \frac{1}{\beta - \alpha} \rceil \leq n$ for large enough $n$.
Hence, with probability at least $1-\frac{1}{n}$,
we have that
over all rounds, $\hat{u}^{(r)}_i$ are 
within their confidence intervals for all coordinates $i$.
Similarly the clean up stage is constructed so that
$|\hat{u}^{(r)}_i- u_i| \le \beta -\alpha$ with 
probability at least $1 - \frac{1}{n^2}$. 
Thus with probability at least $1-\frac{2}{n}$ all our
estimates are within the confidence intervals constructed
throughout the algorithm.

Notice that if the confidence interval of question $i$ (with parameter $u_i$) is reduced to less than $\Gamma_i/2$ in the $r$-th iteration, at that point (or previously) the algorithm must either accept or reject $u_i$.
This is because any $u_i > \beta$ will have $\hat u_i^{(r)} - \Gamma_i/2 > \alpha$, any $u_i < \alpha$ will have $\hat u_i^{(r)} + \Gamma_i/2 < \beta$, and any $u_i \in [\alpha,\beta]$ will have a confidence interval of total length at most $\Gamma_i = \beta - \alpha$, which cannot include both $\alpha$ and $\beta$.
Hence, for each question $i$ of the $n- \kappa$ questions eliminated before the clean up stage, the total number of workers used at its last iteration before elimination is, by Lemma~\ref{lem:CI}, at most 
\aln{
\frac{3 \log n + 2 \log (2n)}{C_4 (\Gamma_i/2)^2} 
\leq \frac{24 \log n}{C_4 \Gamma_i^2},
}
where we upper bound $2\log(2n)$ by $3\log(n)$ for $n>4$.
Since the number of workers used at each iteration grows as a geometric progression, the total number of questions answered for all the $n - \kappa$ questions that are eliminated before the clean up stage is at most
\begin{align*}
    \sum_{\ell=\kappa +1}^n \frac{32}{C_4}\frac{\log n}{(\Gamma^{(\ell)})^2}.
\end{align*}
Similarly for each question resolved in the clean up stage of $\kappa$ workers, we 
need at most $\frac{24}{C_4}\frac{\log n}{(\beta-\alpha)^2}$ total
worker responses. 
Since
\begin{align*}
 \kappa \frac{24}{C_4}\frac{\log n}{(\beta-\alpha)^2} 
 \leq 2\left( \frac{12}{C_4}\frac{\log n}{(\beta-\alpha)^2}\right)^2,
\end{align*}
the result follows.
\end{proof}

\section{Constants}\label{app:constants}

The results in Section~\ref{sec:analysis} are stated in terms of several constants. We assume that there is some $c> 0$ such that
$u_i> c \ \ , q_j > c$ for all $i, j$.
Following the derivations in their respective proofs, these constants are given by
\aln{
    C_1 &= \min(C_4,(6C_3/c)^{-2})\\
    C_2 &= c^4/48\\
    C_3 &= 4/c^4 + 30\sqrt{2}\\
    C_4&= c^2\min(1/18, C_2/9)\\
    C_5&=\frac{C_1}{64}.
}
We present these constants here for the sake of completeness, noting that several of the bounding steps in the derivations could be loose, and these constants are not expected to be tight.
One way to improve algorithm performance in practice is to first run the algorithm in Section~\ref{sec:analysis} on a dataset with known ground truth, 
and empirically estimate the true constants.

\section{Comparison with other methods of constructing confidence intervals}

In this section, we discuss two alternative ways to construct 
estimators to be used with the bandit algorithms. 
We first consider row averages as an estimator for the bandit 
algorithms, and then discuss the connection of bounds we 
use on our spectral estimators with those of \citet{abbe2017entrywise}.

\subsection{Row averages}
In this section, we discuss an alternative way to construct 
estimators to be used with the bandit algorithms. 
We consider row averages as an estimator for the bandit 
algorithms and show that we could run a top-$k$
algorithm with row averages as the estimators, but would
not be able to run the thresholding bandits algorithm as we do not
obtain unbiased estimates.

We can construct estimators based off of row sums.
For $X$ with $\E X = \u\v^\top$, we estimate $u_i$ with
\begin{equation}
    \hat{u}_i^{(m)} = \frac{1}{m}\sum_{j=1}^m X_{i,j}.
\end{equation}
Notice that this is similar in spirit to the Jaccard similarity estimator described in (\ref{eq:mhap}) and, in practice, provides a worse estimator to the overlap sizes than the estimators based on a rank-one model \cite{baharav2019spectral}.
However, these estimators can theoretically still be used to find the reads with the largest overlaps, as we describe next.
Considering that $X_{i,j}$ has expectation $u_iv_j$, we note that 
\begin{align*}
    \hat{u}_i^{(m)} - \hat{u}_k^{(m)}
    &= \frac{1}{m}\sum_{j=1}^m \left( X_{i,j} - X_{k,j}\right)\\
    &= \frac{1}{m}\sum_{j=1}^m \left( (X_{i,j}-u_iv_j)+u_iv_j - (X_{k,j}-u_kv_j) - u_kv_j\right)\\
    &= \frac{1}{m}\sum_{j=1}^m \left( X_{i,j}-u_iv_j-(X_{k,j}-u_kv_j)\right)+\frac{1}{m}\sum_{j=1}^m v_j(u_i -u_k)
\end{align*}
Hence, for $u_i > u_k$,
\begin{align*}
    \P\left( \hat{u}_i^{(m)} < \hat{u}_k^{(m)} \right)
    &\le \P \left( \frac{1}{m}\sum_{j=1}^m \left( X_{i,j}-u_iv_j-(X_{k,j}-u_kv_j)\right) < \frac{1}{m}\sum_{j=1}^m v_j(u_k -u_i)\right)\\
    &\le \P \left( \frac{1}{2m}\sum_{j=1}^m \left( X_{i,j}-u_iv_j-(X_{k,j}-u_kv_j)\right) <  \left(\frac{1}{m}\sum_{j=1}^m v_j \right)\frac{u_k -u_i}{2}\right)\\
    &\le 2\exp \left(-  \frac{m}{4}\left(\frac{1}{m}\sum_{j=1}^m v_j \right)^2 (u_k - u_i)^2 \right),\\
    &= 2\exp \left(-  \frac{m\bar{v}^2 (u_k - u_i)^2}{4} \right),
\end{align*}
where $\bar{v} = \frac{1}{m}\sum_{j=1}^m v_j$.

This follows since $X_{i,j}-u_iv_j$ is a zero-mean bounded random variable.
This bound implies that 
$m = \frac{4\log \left(\frac{2n}{\delta} \right)}{ \bar{v}^2\epsilon^2}$ yields the desired result, that $\P\left( \argmax_{i \in [n]} \hat{u}_i^{(m)} =  \argmax_{i \in [n]} u_i\right) \ge 1-\delta$.
and so for the top-$k$ scenario we have that a budget of 
\begin{equation}
    T=n\frac{\log \left(\frac{2n}{\delta} \right)}{ \bar{v}^2\Delta_{k+1}^2}
\end{equation}
is required by the uniform sampling row sum algorithm.
We note that this concentration analysis is for uniform sampling, but it shows us that  we could do sequential halving on the row sums to adaptively find the maximal $u_i$, with a similar analysis to \cite{baharav2019ultra}.
Note that this is critically using the fact that the estimators for $u_i$ and $u_k$ are taken across the same $v_j$, and that we are not able to generate unbiased estimates of the $u_i$ with this method, only to preserve ordering. Hence while such an estimator can 
be used with a top-$k$ bandits algorithm, we are unable 
to use it with a thresholding bandit algorithm.

\begin{note}
To provide some intuition, we remark that the row averages estimator has an
advantage, in that for an $n \times m$ matrix $X$ the 
error of  $\hat{u}_i^{(m)} - \hat{u}_k^{(m)}$
decays roughly as $\frac{1}{\sqrt{m}}$ (when $\bar{v}$ is $O(1)$). On the other hand, with the 
spectral estimator we use, the error 
of the estimator of ${u}_i$ only decays as
roughly $\frac{1}{\sqrt{\min(n,m)}}$.
\end{note}

\subsection{Other spectral estimators}\label{app:abbe}
In \citet[Eq (2.7)]{abbe2017entrywise}, the authors consider the 
case of symmetric matrices $X$. For the case when $X$ is rank $1$
their result can be interpreted as
\begin{align}\label{eq:abbe}
    \left\|\hat{\mathbf{u}} - \frac{X \u}{\|\u\|^2}\right\|_{\infty} = o_{\mathbb{P}}(1) \|\u\|_{\infty},
\end{align}
where $o_{\mathbb{P}}$ is the small oh notation for convergence in probability. 
In particular, a sequence of random variables $\{Z_n\}_{n\ge 1}$ is $ o_{\mathbb{P}}(1)$
if $\lim_{n \rightarrow\infty }\mathbb{P}(|Z_n| > \epsilon) = 0$ for all $\epsilon > 0$.

They note that one can control this quantity even though there
are simple examples where $\|\hat{\u} - \u\|_{\infty}$ 
is not controlled. In this manuscript we deal with the rectangular
matrix and derive control over the slightly different quantity,
\begin{align}
    \left\|\u - \frac{X \hat{\v}}{\|\v\|\|\hat{\v}\|}\right\|_{\infty}
\end{align}
for the special case when $\E X$ is rank $1$.
This is similar in spirit to their work as we do not directly 
control the $\ell_{\infty}$ norm of the differences of 
the observed and expected left singular vector.
However, we note that controlling the LHS of \eqref{eq:abbe}
does not trivially give us the confidence intervals we need.

\section{Implementation Details for Algorithm~\ref{alg:bandit_topk}}\label{app:implementation}
While in theory we stop at $\sqrt{T}$ arms remaining, in practice we continue halving until there are fewer than $2k$ remaining arms, at which point we output the $k$ arms with largest $\hat{u}_i$.
While theoretically we are unable to take advantage of the scenario $m>n$ (due to the constraint in Theorem~\ref{thm:final_confidence}), in practice, increasing $m$ beyond $n$ still improves the estimates $\hat u_i$, and we do not need to perform the clean up stage with $\sqrt{T}$ arms remaining.

In the practical implementation of Algorithm~\ref{alg:bandit_topk}, we also impose a maximum number of measurements per item (finite number of workers), and so terminate our algorithm and return the top $k$ if $t_r=m_{\rm max}$ for some a priori fixed quantity $m_{\rm max}$.

While Algorithm~\ref{alg:bandit_topk} requires 
``oracle knowledge'' of $\|\v^{(r)}\|$, in practice that cannot be obtained 
and we use $\|\hat{\v}^{(r)}\|$ instead.
Notice that knowledge of the exact value of $\|\v\|$ would only provide a rescaling of our estimates, and so relative ordering is preserved in our $\hat{\u}^{(r)}$ if we use $\|\hat{\v}^{(r)}\|$, which is sufficient for top-$k$ identification. 
Notice that this is not the case for the thresholding bandits considered in Appendix~\ref{app:threhsold}.
For the \textit{E.~coli} dataset, we obtain $\hat{\v}^{(r)}$ in Algorithm \ref{alg:spectral} using the scheme proposed in \citet{baharav2019spectral}, that is by taking column sums of $X^{(r)}$. 
We run our simulations on reference read 1 of their dataset, as reference read 0 has 10 non-trivial alignments, whereas reference read 1 has 5 as desired.

While in theory we do not reuse old samples to maintain independence, in our algorithm we do.
This is done naturally by running Algorithm 1 on the $\{0,1\}^{|\CI_r|\times \sum_{i=0}^r t_r}$ matrix of responses on all the previously asked questions (not just those in the current round).
Similarly, we do not split our matrix in 2 for Algorithm \ref{alg:spectral}; we estimate $\hat{\v}^{(r)}$ from the entirety of $X^{(r)}$, and compute $\hat{\u}^{(r)}=X^{(r)} \hat{\v}^{(r)}$.

For top 2k, we ran both algorithms to return their estimated top 2k, which we denote as the set \texttt{Top-2k}, then evaluated the performance as $\frac{1}{k} \sum_{i=1}^k \one\{(i) \in \texttt{ Top-2k} \}$.
For the error probability plot, we evaluated the performance by running the algorithms to return their estimated top k, \texttt{Top-k}, and computing $\one \left\{\texttt{Top-k} \equiv \{(i) : i \in [k]\} \right\}$.
For each simulation, we run 100 trials and report the mean of our performance metric as well as its standard deviation (shaded in).

For the controlled experiments, $u_i$'s are generated according to a Beta$(1,5)$
distribution and $q_i$ are generated according to a uniform$(0,1)$ distribution.

\subsection{Computing architecture and runtime}
Min-hashes took 18 hours to generate on 50 cores of an AMD Opteron Processor 6378 with 500GB memory. 
Generating the empirical results for uniform and adaptive on the \textit{E.~coli} dataset took took 36 minutes on one core.
Generating the empirical results for the synthetic crowdsourcing experiments took 3 hours on one core, due to the fact that there is no efficient approximation for the right singular vector, and so one needs to compute the actual SVD of $X^{(r)}$ in every iteration.

\end{document}
